\def\*#1{\mathbf{#1}}
\useunder{\uline}{\ull}{}
\newtheorem{lemma}{Lemma}
\definecolor{newblue}{rgb}{0, 0.4470, 0.7410}
\definecolor{newred}{rgb}{0.8500, 0.3250, 0.0980}
\newcommand{\highlightChange}{\color{newred}}
\def\HC{\highlightChange}
\newcommand{\highlightChangeBlue}{\color{newblue}}
\def\HCB{\highlightChangeBlue}
\title{Learning to Learn Graph Topologies}
\author{%
  Xingyue ~Pu  \\
  % Department of Engineering Science \\
  University of Oxford \\
  \texttt{xpu@robots.ox.ac.uk} \\
  \And
  Tianyue ~Cao \hspace{0.3cm} Xiaoyun ~Zhang \\
  Shanghai Jiao Tong University \\
  \texttt{\{vanessa\_,xiaoyun.zhang\}@sjtu.edu.cn} \\
  \AND
  Xiaowen ~Dong \\
  % Department of Engineering Science \\
  University of Oxford \\
  \texttt{xdong@robots.ox.ac.uk} \\
  \And
  Siheng ~Chen\thanks{Corresponding author} \\
  Shanghai Jiao Tong University \\
  \texttt{sihengc@sjtu.edu.cn} \\
}
\begin{document}

\maketitle

\begin{abstract} 
\label{sec: abstract}

% why important
Learning a graph topology to reveal the underlying relationship between data entities plays an important role in various machine learning and data analysis tasks. Under the assumption that structured data vary smoothly over a graph, the problem can be formulated as a regularised convex optimisation over a positive semidefinite cone and solved by iterative algorithms. Classic methods require an explicit convex function to reflect generic topological priors, e.g. the $\ell_1$ penalty for enforcing sparsity, which limits the flexibility and expressiveness in learning rich topological structures. We propose to learn a mapping from node data to the graph structure based on the idea of learning to optimise (L2O).
%However, previous methods require an explicit graph prior to formulate an optimisation. To make graph learning more flexible, we propose to learn a mapping from the node features to the graph based on the idea of learning to optimise. 
Specifically, our model first unrolls an iterative primal-dual splitting algorithm into a neural network. The key structural proximal projection is replaced with a variational autoencoder that refines the estimated graph with enhanced topological properties. The model is trained in an end-to-end fashion with pairs of node data and graph samples. Experiments on both synthetic and real-world data demonstrate that our model is more efficient than classic iterative algorithms in learning a graph with specific topological properties. 
%For example, in recovering scale-free networks, the proposed method significantly outperforms the previous methods. 

\end{abstract}

\section{Introduction}
\label{sec: introduction}

%= Background: the motivation of graph learning (why important?) \par
Graphs are an effective modelling language for revealing relational structure in high-dimensional complex domains and may assist in a variety of machine learning tasks. However, in many practical scenarios an explicit graph structure may not be readily available or easy to define. %but required by graph-based machine learning tasks. 
Graph learning aims at learning a graph topology from observation on data entities and is therefore an important problem studied in the literature. 

% Existing formulations of graph learning based on Gaussian smoothness assumption: regularised maximum log-likelihood.  \par
%Naive methods, e.g. $k$-nearest neighbour graph construction, compute the pairwise similarity of nodes representing variables from data observations on these variables. 
Model-based graph learning from the perspectives of probabilistic graphical models \cite{lauritzen1996graphical} or graph signal processing \cite{shuman2013emerging,sandryhaila2013discrete,Ortega2018} solves an optimisation problem over the space of graph candidates whose objective function reflects the inductive graph-data interactions. The data are referred as to the observations on nodes, node features or graph signals in literature. 
The convex objective function often contains a graph-data fidelity term and a structural regulariser. For example, by assuming that data follow a multivariate Gaussian distribution, the graphical Lasso \cite{Banerjee2008ModelData} maximises the $\ell_1$ regularised log-likelihood of the precision matrix which corresponds to a conditional independence graph. Based on the convex formulation, the problem of learning an optimal graph can be solved by many iterative algorithms, such as proximal gradient descent \cite{guillot2012iterative}, with convergence guarantees. 

These classical graph learning approaches, despite being effective, share several common limitations. Firstly, handcrafted convex regularisers may not be expressive enough for representing the rich topological and structural priors. In the literature, only a limited number of graph structural properties can be modelled explicitly using convex regularisers, e.g. the $\ell_1$ penalty for enforcing sparsity. More complex structures such as scale-free and small-world properties have been largely overlooked due to the difficulty in imposing them via simple convex optimisation. Secondly, graph structural regularisers might not be differentiable despite being convex, which perplexed the design of optimisation algorithms. Thirdly, iterative algorithms might take long to converge, as the search space grows quadratically with the graph size. Fourthly, tuning penalty parameters in front of structural regularisers and the step size of iterative algorithms are laborious. \par

% = Overview of our methods:
To address the above limitations, we propose a novel functional learning framework to learn a mapping from node observations to the underlying graph topology with desired structural property. Our framework is inspired by the emerging field of \textit{learning to optimise} (L2O) \cite{chen2021learning, monga2021algorithm}. Specifically, as shown in Figure~\ref{fig: illustration}, we first unroll an iterative algorithm for solving the aforementioned regularised graph learning objective. %This leverages sensible inductive bias into our model. 
To further increase the flexibility and expressiveness, we design a topological difference variational autoencoder (TopoDiffVAE) to replace the proximal projector of structural regularisation functions in the original iterative steps.
%that projects the graph estimates from gradient descent in the unrolled iterative steps into a set of graph that satisfy the structural proper
%The TopoDiffVAE takes the graph estimate from the gradient descent step of PDS as input and returns a refined graph estimate with enhanced structural properties.
%During training, the encoder of TopoDiffVAE learns the structural difference between unrolling output and training graphs. The decoder is learned to translate the difference-concatenated graph representation to a refined graph. 
%Once trained, the graph refinement can be achieved by generating the latent difference variable from standard Gaussian distribution. 
We train the model in an end-to-end fashion with pairs of data and graphs that share the same structural properties. Once trained, the model can be deployed to learn a graph topology that exhibits such structural properties.\par
%transferred to the domain of graphs with similar properties, e.g. graph size, sparsity, scale-free properties, where we only observe the node data and want to infer a graph with such properties.  \par

%\Note{SC: a comparison with previous works.}
% priors 
The proposed method learns topological priors from graph samples, which are more expressive in learning structured graphs compared to traditional methods using analytic regularisers. Compared to deep neural networks, unrolling an iterative algorithm that is originally designed for graph learning problem introduces a sensible inductive bias that makes our model highly interpretable. We test the effectiveness and robustness of the proposed method in learning graphs with diverse topological structures on both synthetic and real-world data. 

% = Contribution:
% (1) formulation - unrolling; (2) TopoDiffVAE
The main contributions are as follows. Firstly, we propose a novel and highly interpretable neural networks to learn a data-to-graph mapping based on algorithmic unrolling.
%graph learning from data into data-to-graph functional learning. 
Secondly, we propose an effective TopoDiffVAE module that replaces the proximal operators of structural regularisers. To the best of our knowledge, this constitutes the first attempt in addressing the difficult challenge of designing convex functions to represent complex topological priors, which in turn allows for improved flexibility and expressiveness. Finally, the proposed method improves the accuracy of graph learning by 80\% compared to traditional iterative solvers.

% = contributions
%(1) We introduce a novel formulation that converts the combinatorial problem of graph learning from data into data-to-graph functional learning. 
%(2) We propose an effective VAE module to translate a graph estimation to a refined one with structural enhancement, which empowers our model to automatically incorporate the implicit structural priors that is difficult to hand-design in a explicit regularisation term.
%(3) Our data-to-graph learning network is modularized with high interpretability, where the module of unrolling networks are inherited from proximal primal-dual algorithm and learned to search the optimal graph estimation and the VAE module to further refine the structure.
%(4) Achieve faster convergence and better performance (reduce the loss from 0.4 to 0.05). 

\begin{figure}[t]
    \centering
    \includegraphics[width=1\linewidth]{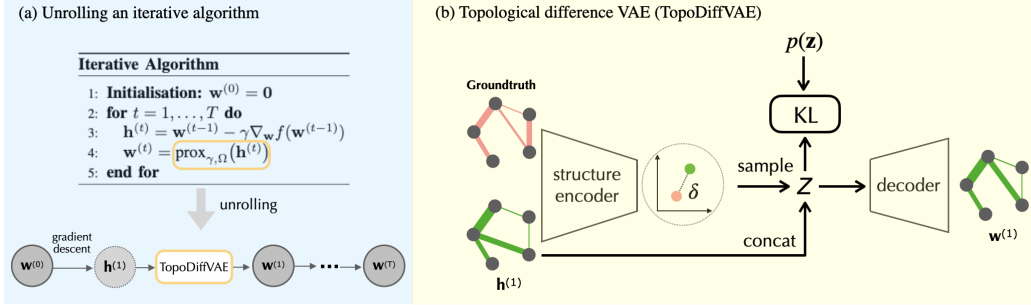}
    \caption{An illustration of the proposed framework. }
    \label{fig: illustration}
\end{figure}

\section{Problem Formulation} \label{sec: formulation}

% We first provide a general formulation of graph learning; we then review a few classical methods in the literature; finally, we motivate the proposed framework.

% Generally, graph learning is formulated as to solve a convex optimisation (Section \ref{sec: formulation_GL}). We argue on some critical limitations in the previous works, which motivate a novel formulation in Section \ref{sec: formulation_L2G}.

\paragraph{Model-based graph learning} 
% \label{sec: formulation_GL}

Let $G =\{\mathcal{V}, \mathcal{E}, \*W\}$ be an undirected weighted graph, where $\mathcal{V}$ is the node set with $|\mathcal{V}| = m$, $\mathcal{E}$ is the edge set and $\*W$ is the weighted adjacency matrix whose $ij$-th entry embodies the similarity between node $i$ and node $j$. The combinatorial graph Laplacian matrix is defined as $\*L = \*D - \*W$, where $\*D = \text{diag}(\*W\mathbf{1})$ is a diagonal matrix of node degrees. In many scenarios, we have access to a structured data matrix, which is denoted as $\*X =[\*x_1, \*x_2, \dots, \*x_m]^{\top} \in \mathbb{R}^{m \times n}$, where each column $\*x_i$ can be considered as a signal on the graph $G$. %is related to the graph topology $G$. 
The goal of graph learning is to infer such a graph $G$ that is fully represented by $\*L$ or $\*W$ from $\*X$. Specifically, we solve a generic optimisation problem
\begin{equation}
\label{eq: GL_objective_matrix}
    \min_{\*L} \operatorname{tr}(\*X^{\top}\*L\*X) + \Omega(\*L),
\end{equation}
where the first term is the so-called Laplacian quadratic form, measuring the variation of $\*X$ on the graph, and $\Omega(\*L)$ is a convex regulariser on $\*L$ to reflect structural priors. Since $\*W$ is symmetric, we introduce the vector of edge weights $\*w \in \mathbb{R}_{+}^{m(m-1)/2}$ and reparameterise Eq.(\ref{eq: GL_objective_matrix}) based on $\operatorname{tr}(\*X^{\top}\*L\*X) = \sum_{i,j}\*W_{ij}||\*x_i - \*x_j||^2_2 = 2\*w^{\top} \*y$, where $\*y \in \mathbb{R}_{+}^{m(m-1)/2}$ is the half-vectorisation of the Euclidean distance matrix. Now, we optimise the objective w.r.t $\*w$ such that
\begin{equation}
\label{eq: GL_objective_vector}
     \min_{\*w} 2\*w^{\top} \*y + \mathcal{I}_{\{\*w \geq 0\}}(\*w) + \Omega_1(\*w) +  \Omega_2(\mathcal{D}\*w),
\end{equation}%
where $\mathcal{I}$ is an indicator function such that $\mathcal{I}_{\mathcal{C}}(\*w) = 0$ if $\*w \in \mathcal{C}$ and $\mathcal{I}_{\mathcal{C}}(\*w) = \infty$ otherwise, and $\mathcal{D}$ is a linear operator that transforms $\*w$ into the vector of node degrees such that $\mathcal{D}\*w = \*W \mathbf{1}$. The reparameterisation reduces the dimension of search space by a half and we do not need to take extra care of the positive semi-definiteness of $\*L$. \par

In Eq.(\ref{eq: GL_objective_vector}), the regularisers $\Omega$ is split into $\Omega_1$ and $\Omega_2$ on edge weights and node degrees respectively. Both formulations can be connected to many previous works, where regularisers are mathematically handcrafted to reflect specific graph topologies. For example, the $\ell_1$-regularised log-likelihood maximisation of the precision matrix with Laplacian constraints \cite{Lake2010a, Kumar2019} can be seen as a special case of Eq.\eqref{eq: GL_objective_matrix}, where $\Omega(\*L) = \log \det (\*L + \sigma^2 \*I) + \lambda \sum_{i \neq j }|\*L_{ij}|$. The $\ell_1$ norm on edge weights enforces the learned conditional independence graph to be sparse. The author in \cite{Kalofolias2016} considers the log-barrier on node degrees, i.e. $\Omega_2(\mathcal{D}\*w) = - \mathbf{1}^{\top}\log (\mathcal{D}\*w)$, to prevent the learned graph with isolated nodes. Furthermore, $\Omega_1(\*w) = ||\*w||^2_2$ is often added to force the edge weights to be smooth \cite{Dong2016a, Kalofolias2016}. \par

%To connect the generic formulation~\eqref{eq: GL_objective_matrix} to previous works, we present several special cases of~\eqref{eq: GL_objective_matrix}. For example, \cite{Kalofolias2016} considers the $\ell_1$ regularised maximum log-likelihood estimation of sparse inverse covariance with the Laplacian constraint, which is the lasso term on edge weights to enforce the learned graph to be sparse, where $\Omega(\*L) = \log \det (\*L + \sigma^2 \*I) + \lambda \sum_{i \neq j }|\*L_{ij}|$ for Eq.(\ref{eq: GL_objective_matrix}); and

%Since many mathematically designed priors are non-differentiable convex functions, the problem is solved with special handcrafted iterative algorithms, e.g. proximal gradient descent \cite{guillot2012iterative}, alternating direction method of multiplier (ADMM) \cite{boyd2011distributed} and block-coordinate decent methods \cite{Friedman2008a}. These algorithms are often guaranteed to converge and achieve a global optimal under the setting of convex optimisation. \par

\paragraph{Learning to learn graph topologies} 
% \label{sec: formulation_L2G}
In many real-world applications, the mathematically-designed topological priors in the above works might not be expressive enough. For one thing, many interesting priors are too complex to be abstracted as an explicit convex function, e.g. the small-world property \cite{watts1998collective}. Moreover, those constraints added to prevent trivial solutions might conflict with the underlying structure. For example, the aforementioned log barrier may encourage many nodes with a large degree, which 
contradicts the power-law assumption in scale-free graphs. The penalty parameters, treated as a part of regularisation, are manually appointed or tuned from inadequately granularity, which may exacerbate the lack of topological expressiveness. These motivate us to directly learn the topological priors from graphs. \par

Formally, we aim at learning a neural network $\mathcal{F}_{ \boldsymbol{\theta}}(\cdot)$ that maps the data term $\*y$ to a graph representation $\*w$ that share the same topological properties and hence belong to the same graph family $\mathcal{G}$. With training pairs $\{ (\*y_i, \*w_i)\}_{i=1}^n$, we consider a supervised learning framework to obtain optimal $\mathcal{F}^{\ast}_{\boldsymbol{\theta}}$ such that
\begin{equation}
\begin{split}
    \boldsymbol{\theta}^{\ast} = \arg \min_{\boldsymbol{\theta}}  \mathbb{E}_{\*w \sim \mathcal{G}} [\mathcal{L}( \mathcal{F}_{ \boldsymbol{\theta}}(\*y), \*w)],
\end{split}
\end{equation}%
where $\mathcal{L}(\widehat{\*w}, \*w)$ is a loss surrogate between the estimation $\widehat{\*w} = \mathcal{F}_{\boldsymbol{\theta}}(\*y)$ and the groundtruth. Once trained, the optimal $\mathcal{F}^{\ast}_{\boldsymbol{\theta}}(\cdot)$ implicitly carries topological priors and can be applied to estimate graphs from new observations. Therefore, our framework promotes learning to learn a graph topology. In the next section, we elaborate on the architecture of $\mathcal{F}_{ \boldsymbol{\theta}}(\cdot)$ and the design of $\mathcal{L}(\cdot, \cdot)$. 

\section{Unrolled Networks with Topological Enhancement}

In summary, the data-to-graph mapping $\mathcal{F}_{\boldsymbol{\theta}}(\cdot)$ is modelled with unrolling layers (Section \ref{sec: method_unrolling}) inherited from a primal-dual splitting algorithm that is originally proposed to solve the model-based graph learning. We replace the proximal operator with a trainable variational autoencoder (i.e. TopoDiffVAE) that refines the graph estimation with structural enhancement in Section \ref{sec: method_TopoDiffVAE}. The overall network is trained in an end-to-end fashion with the loss function introduced in Section \ref{sec:method_net_training}.

\subsection{Unrolling primal-dual iterative algorithm}
\label{sec: method_unrolling}

Our model leverages the inductive bias from model-based graph learning. Specifically, we consider a special case of the formulation in Eq.~\eqref{eq: GL_objective_vector} such that
\begin{equation}
\label{eq: graph_leanring_objective_with_log}
    \min_{\*w} \ 2\*w^{\top}\*y + \mathcal{I}_{\{\*w \geq 0\}}(
    \*w) - \alpha \mathbf{1}^{\top}\log(\mathcal{D}\*w) + \beta||\*w||^2_2
\end{equation}
where $\Omega_1(\*w) = \beta||\*w||^2_2$ and $\Omega_2(\mathcal{D}\*w) = - \alpha \mathbf{1}^{\top}\log(\mathcal{D}\*w)$, both of which are mathematically handcrafted topological priors. To solve this optimisation problem, we consider a forward-backward-forward primal-dual splitting algorithm (PDS) in Algorithm \ref{alg:PDS}. The algorithm is specifically derived from Algorithm 6 in \cite{komodakis2015playing} by the authors in \cite{Kalofolias2016}. In Algorithm \ref{alg:PDS}, Step 3, 5 and 7 update the primal variable that corresponds to $\*w$, while Step 4, 6 and 8 update the dual variable in relation to node degrees. The forward steps are equivalent to a gradient descent on the primal variable (Step 3 and 7) and a gradient ascent on the the dual variable (Step 4 and 8), both of which are derived from the differentiable terms in Eq.(\ref{eq: graph_leanring_objective_with_log}). The backward steps (Step 5 and 6) are proximal projections that correspond to two non-differentiable topological regularisers, i.e. the non-negative constraints on $\*w$ and the log barrier on node degrees in Eq.(\ref{eq: graph_leanring_objective_with_log}). The detailed derivations are attached in Appendix \ref{appendix:derivations4PDS}. It should be note that $\ell_1$ norm that promotes sparsity is implicitly included, i.e. $2\*w^{\top}\*y = 2\*w^{\top}(\*y-\sfrac{\lambda}{2}) + \lambda||\*w||_1$ given $\*w \geq 0$. We choose to unroll PDS as the main network architecture of $\mathcal{F}_{\boldsymbol{\theta}}(\cdot)$, since the separate update steps of primal and dual variables allow us to replace of proximal operators that are derived from other priors. 

%{\HC The iterative rules separate the update of primal and dual variables that represent the edge weights and node strength in our case, allowing alternative proximal operators of custom regularisations.} \Note{SC: need some details.}  \par

\begin{minipage}{0.47 \textwidth}
    \begin{algorithm}[H]
    \caption{PDS}
    \label{alg:PDS}
     \begin{algorithmic}[1]
     \renewcommand{\algorithmicrequire}{\textbf{Input:}}
     \renewcommand{\algorithmicensure}{\textbf{Output:}}
     \REQUIRE $\*y$, $\gamma$, $\alpha$ and $\beta$.
     
     \STATE \textbf{Initialisation: $\*w^{(0)} = \*0, \*v^{(0)} = \*0$} \par
    
     \WHILE{$|\*w^{(t)} - \*w^{(t-1)}| > \epsilon$}
       \STATE  $\*r_1^{(t)} = \*w^{(t)} - \gamma(2\beta \*w^{(t)} + 2\*y + \mathcal{D}^{\top} \*v^{(t)})$
      
       \STATE  $\*r_2^{(t)} = \*v^{(t)} + \gamma \mathcal{D} \*w^{(t)}$
      
       \STATE $\*p_1^{(t)} = \text{prox}_{\gamma, \Omega_1}( \*r_1^{(t)}) = \max\{\*0, \*r_1^{(t)}\}$ 
       
       \vspace{1.35cm}
      
       \STATE $\*p_2^{(t)} = \text{prox}_{\gamma, \Omega_2}(\*r_2^{(t)})$, where \par $\big(\text{prox}_{\gamma^{(t)}, \Omega_2}(\*r_2) \big)_i = \frac{r_{2,i} - \sqrt{r_{2,i}^2 + 4\alpha \gamma}}{2}$
      
       \STATE $\*q_1^{(t)} = \*p_1^{(t)} - \gamma (2\beta \*p_1^{(t)} + 2\*y + \mathcal{D}^{\top} \*p_2^{(t)}$)
       
       \STATE $\*q_2^{(t)} = \*p_2^{(t)} + \gamma \mathcal{D} \*p_1^{(t)}$
    
       \STATE $\*w^{(t+1)} = \*w^{(t)} - \*r_1^{(t)} + \*q_1^{(t)}$
       \STATE $\*v^{(t+1)} = \*v^{(t)} - \*r_2^{(t)} + \*q_2^{(t)}$
    \ENDWHILE
    \RETURN $\widehat{\*w} = \mathbf{w}^{(T)}$
    \end{algorithmic} 
\end{algorithm}
\end{minipage}%
\hfill
\begin{minipage}{0.51\textwidth}
    \begin{algorithm}[H]
    \caption{Unrolling Net (L2G)}
    \label{alg:unrolling}
     \begin{algorithmic}[1]
     \renewcommand{\algorithmicrequire}{\textbf{Input:}}
     \renewcommand{\algorithmicensure}{\textbf{Output:}}
     \REQUIRE $\*y$, $T$, \texttt{enhancement}$^{(t)}$\texttt{$\in$}\{\texttt{TRUE}, \texttt{FALSE}\}
     
     \STATE \textbf{Initialisation: $\*w^{(0)} = \*0, \*v^{(0)} = \*0$} \par
    
     \FOR {$t = 0, 1, \dots, T$}
       \STATE  $\*r_1^{(t)} = \*w^{(t)} - {\HCB \gamma^{(t)}}(2{\HCB \beta^{(t)}} \*w^{(t)} + 2\*y + \mathcal{D}^{\top} \*v^{(t)})$
      
       \STATE  $\*r_2^{(t)} = \*v^{(t)} + {\HCB \gamma^{(t)}} \mathcal{D} \*w^{(t)}$
      
       %\STATE $\*p_1^{(t)} = \text{prox}_{\gamma^{(t)}, \Omega_1}( \*r_1^{(t)}) $
       
       % \STATE $\*p_1^{(t)} = {\HC \text{TopoDiffVAE}}(\*r_1^{(t)})$
       \STATE if \texttt{enhancement$^{(t)}$=TRUE}, \\ \hspace{0.4cm} $\*p_1^{(t)} = {\HC \text{TopoDiffVAE}}(\*r_1^{(t)})$; \\
       else, \\ \hspace{0.4cm} $\*p_1^{(t)} = \text{prox}_{\gamma, \Omega_1}( \*r_1^{(t)}) = \max\{\*0, \*r_1^{(t)}\}$.
      
       \STATE $\*p_2^{(t)} = \text{prox}_{{\HCB \gamma^{(t)}}, \Omega_2}(\*r_2^{(t)})$, where \par $\big(\text{prox}_{{\HCB \gamma^{(t)}}, \Omega_2}(\*r_2) \big)_i = \frac{r_{2,i} - \sqrt{r_{2,i}^2 + 4{\HCB \alpha^{(t)}} {\HCB \gamma^{(t)}}}}{2}$
      
       \STATE $\*q_1^{(t)} = \*p_1^{(t)} - {\HCB \gamma^{(t)}} (2{\HCB \beta^{(t)}} \*p_1^{(t)} + 2\*y + \mathcal{D}^{\top} \*p_2^{(t)}$)
       
       \STATE $\*q_2^{(t)} = \*p_2^{(t)} + {\HCB \gamma^{(t)}} \mathcal{D} \*p_1^{(t)}$
    
       \STATE $\*w^{(t+1)} = \*w^{(t)} - \*r_1^{(t)} + \*q_1^{(t)}$
       \STATE $\*v^{(t+1)} = \*v^{(t)} - \*r_2^{(t)} + \*q_2^{(t)}$
    \ENDFOR
    \RETURN $\mathbf{w}^{(1)}, \mathbf{w}^{(2)}, \dots, \mathbf{w}^{(T)} = \widehat{\*w}$
    \end{algorithmic} 
\end{algorithm}
\end{minipage}
%\end{wrapfigure}

The proposed unrolling network is summarised in Algorithm \ref{alg:unrolling}, which unrolls the updating rules of Algorithm\ref{alg:PDS} for $T$ times and is hypothetically equivalent to running the iteration steps for $T$ times. In Algorithm \ref{alg:unrolling}, we consider two major
substitutions to enable more flexible and expressive learning. 

Firstly, we replace the hyperparameters $\alpha$, $\beta$ and the step size $\gamma$ in the PDS by layer-wise trainable parameters (highlighted in blue in Algorithm \ref{alg:unrolling}), which can be updated through backpropagation. Note that we consider independent trainable parameters in each unrolling layer, as we found it can enhance the representation capacity and boost the performance compared to the shared unrolling scheme~\cite{gregor2010learning,monga2021algorithm}. 

Secondly, we provide a layer-wise optional replacement for the primal proximal operator in Step 5. When learning complex topological priors, a trainable neural network, i.e. the TopoDiffVAE (highlighted in red in Algorithm~\ref{alg:unrolling}) takes the place to enhance the learning ability. TopoDiffVAE will be defined in Section~\ref{sec: method_TopoDiffVAE}. It should be noted that such a replacement at every unrolling layer might lead to overly complex model that is hard to train and might suffer from overfitting, which depends on the graph size and number of training samples. Empirical results suggest TopoDiffVAE takes effects mostly on the last few layers. Comprehensibly, at first several layers, L2G quickly search a near-optimal solution with the complete inductive bias from PDS, which provides a good initialisation for TopoDiffVAE that further promotes the topological properties.
%Therefore, a good first tryout is to set \texttt{enhancement}$^{(t)}=$\texttt{False} for $t = 1,2,\dots,T-1$ and \texttt{enhancement}$^{(T)}=$\texttt{True}.
In addition, we do not consider a trainable function to replace the proximal operator in Step 6 of Algorithm \ref{alg:PDS}, as it projects dual variables that are associated to node degrees but do not draw a direct equality. It is thus difficult to find a loss surrogate and a network architecture to approximate such an association. 

\subsection{Topological difference variational autoencoder (TopoDiffVAE)}
\label{sec: method_TopoDiffVAE}

%  replace all the w_x and w_y . don't use them. 

The proximal operator in Step 5 of Algorithm \ref{alg:PDS} follows from $\Omega_1 (\*w)$ in Eq.~\eqref{eq: graph_leanring_objective_with_log}. Intuitively, an ideal regulariser in Eq.~\eqref{eq: graph_leanring_objective_with_log} would be an indicator function $\Omega_1 = \mathcal{I}_{\mathcal{G}}(\cdot)$, where $\mathcal{G}$ is a space of graph sharing same topological properties. The consequent proximal operator projects $\*w$ onto $\mathcal{G}$. However, it is difficult to find such an oracle function. To adaptively learn topological priors from graph samples and thus achieve a better expressiveness in graph learning, we design a topological difference variational autoencoder (TopoDiffVAE) to replace the handcrafted proximal operators in Step 5 of Algorithm~\ref{alg:PDS}. \par

% the notation has been changed: no w_x and w_y anymore.
Intuitively, the goal is to learn a mapping between two graph domains $\mathcal{X}$ and $\mathcal{Y}$. The graph estimate after gradient descent step $\*r_1^{(t)} \in \mathcal{X}$ lacks topological properties while the graph estimate after this mapping $\*p_1^{(t)} \in \mathcal{Y}$ is enhanced with the topological properties that are embodied in the groundtruth $\*w$, e.g. graphs without and with the scale-free property. We thus consider a conditional VAE such that $\mathcal{P}: (\*r_1^{(t)}, \*z) \rightarrow \*w$, where the latent variable $\*z$ has an implication of the topological difference between $\*r_1^{(t)}$ and $\*w$. In Algorithm \ref{alg:unrolling}, $\*p_1^{(t)}$ is the estimate of the groundtruth $\*w$ from $\mathcal{P}$. When we talk about the topological difference, we mainly refer to binary structure. This is because edge weights might conflict with the topological structure, e.g. an extremely large edge weight of a node with only one neighbour in a scale-free network. We want to find a continuous representation $\*z$ for such discrete and combinatorial information that allows the backpropagration flow. A straightforward solution is to embed both graph structures with graph convolutions networks and compute the differences in the embedding space. Specifically, the encoder and decoder of the TopoDiffVAE $\mathcal{P}$ are designed as follows.  \par

%\hl{To avoid the abuse of notation and also follow the definition in the VAE literature, we use the half-vectorisation $\*w_x$ and $\*w_y$ to represent the graph variable in this section. If TopoDiffVAE is deployed at $t$-th unrolling layer in Algorithm} \ref{alg:unrolling}, \hl{ $\*r_1^{(t)}$ and the groundtruth $\*w$ are the realisations of the random variables $\*w_x$ and $\*w_y$ correspondingly; and $\*p_1^{(t)}$ is the estimate of the groundtruth from TopoDiffVAE, denoted as $\hat{\*w}_y$ in this section.}

\paragraph{Encoder.} We extract the binary adjacency matrix of $\*r_1^{(t)}$ and $\*w$ by setting the edge weights less than $\eta$ to zero and those above $\eta$ to one. Empirically, $\eta$ is a small value to exclude the noisy edge weights, e.g. $\eta = 10^{-4}$. The binary adjacency matrix are denoted as $\*A_r$ and $\*A_w$ respectively. Following \cite{Kingma2014}, the approximate posterior $q_{\phi}(\*z|\*r_1^{(t)}, \*w)$ is modelled as a Gaussian distribution whose mean $\boldsymbol{\mu}$ and covariance $\boldsymbol{\Sigma} = \boldsymbol{\sigma}^2\*I$ are transformed from the topological difference between the embedding of $\*A_r$ and $\*A_w$ such that
\begin{subequations}
\begin{gather}
    \boldsymbol{\delta} = f_{\text{emb}}(\*A_w) - f_{\text{emb}}(\*A_r)  \label{eq: encoder_diff} \\
    \boldsymbol{\mu} = f_{\text{mean}}(\boldsymbol{\delta}),  \quad  \boldsymbol{\sigma} = f_{\text{cov}}(\boldsymbol{\delta}), \quad  \*z|\*r_1^{(t)}, \*w \sim \mathcal{N}(\boldsymbol{\mu}, \boldsymbol{\sigma}^2\*I)  \label{eq: encoder_fmean}
\end{gather}
\end{subequations}%
where $f_{\text{mean}}$ and $f_{\text{cov}}$ are two multilayer perceptrons (MLPs). The embedding function $f_{\text{emb}}$ is a 2-layer graph convolutional network\cite{kipf2016semi} followed by a readout layer $f_{\text{readout}}$ that averages the node representation to obtain a graph representation, that is, 
\begin{equation}
\label{eq: encoder_femb}
    f_{\text{emb}}(\*A) = f_{\text{readout}}\Big( \psi \big( \*A \psi (\*A \*d \*h_0^{\top}) \*H_1 \big)\Big),
\end{equation}

where $\*A$ refers to either $\*A_r$ and $\*A_w$, $\*h_0$ and $\*H_1$ are learnable parameters at the first and second convolution layer that are set as the same in embedding $\*A_r$ and $\*A_w$. $\psi$ is the non-linear activation function. Node degrees $\*d = \*A \*1$ are taken as an input feature. 
%the node features $\*d$ are taken as the node degree vector $\*d_X = \*W_X \*1$,
%$\*h_0$ and $\*H_1$ are learnable parameters at the first and second convolution layer, and $\psi$ is the non-linear activation function. 
This architecture is flexible to incorporate additional node features by simply replacing $\*d$ with a feature matrix $\*F$. \par

\paragraph{Latent variable sampling.} We use Gaussian reparameterisation trick \cite{Kingma2014} to sample $\*z$ such that $\*z = \boldsymbol{\mu} + \boldsymbol{\sigma} \odot \boldsymbol{\epsilon}$, where $\boldsymbol{\epsilon} \sim \mathcal{N}(\*0, \*I)$. We also restrict $\*z$ to be a low-dimensional vector to prevent the model from ignoring the original input in the decoding step and degenerating into an auto-encoder. Meanwhile, the prior distribution of the latent variable is regularised to be close to a Gaussian prior $\*z \sim\mathcal{N}(\*0, \*I)$ to avoid overfitting.
%ensure the latent space has good properties that enable successful generation of topological difference during evaluation. 
We minimise the Kullback–Leibler divergence between $q_{\phi}( \*z|\*r_1^{(t)}, \*w)$ and the prior $p(\*z)$ during training as shown in Eq.(\ref{eq: loss}).  \par

\paragraph{Decoder.}
The decoder is trained to reconstruct the groundtruth graph representations $\*w$ (the estimate is $\*p_1^{(t)}$) given the input $\*r_1^{(t)}$ and the latent code $\*z$ that represents topological differences. Formally, we obtain an augmented graph estimation by concatenating $\*r_1^{(t)}$ and $\*z$ and then feeding it into a 2-layer MLP $f_{\text{dec}}$ such that
\begin{equation}
    \*p_1^{(t)} = f_{\text{dec}}(\text{CONCAT}[\*r_1^{(t)}, \*z]).
\end{equation}%

\subsection{An end-to-end training scheme}
\label{sec:method_net_training}
By stacking the unrolling module and TopoDiffVAE, we propose an end-to-end training scheme to learn an effective data-to-graph mapping $\mathcal{F}_{\boldsymbol{\theta}}$ that minimises the loss from both modules such that
\begin{equation}
\label{eq: loss}
    \min_{\boldsymbol{\boldsymbol{\theta}}} \ \mathbb{E}_{\*w \sim \mathcal{G}} \Bigg[ \sum_{t=1}^{T}  \bigg\{ \tau^{T-t}  \frac{||\*w^{(t)} - \*w||^2_2}{||\*w||^2_2} + \beta_{\text{KL}} \text{KL}\Big( q_{\phi}(\*z|\*r_1^{(t)}, \*w) || \mathcal{N}(\*0, \*I) \Big) \bigg\} \Bigg]
\end{equation}%
where $\*w^{(t)}$ is the output of the $t$-th unrolling layer and $\*r_1^{(t)}$ is the output of Step 3 in Algorithm \ref{alg:unrolling}, $\tau \in (0, 1]$ is a loss-discounting factor added to reduce the contribution of the normalised mean squared error of the estimates from the first few layers. This is similar to the discounted cumulative loss introduced in \cite{Shrivastava2020GLAD}. The first term represents the graph reconstruction loss, while the second term is the KL regularisation term in the TopoDiffVAE module. When the trade-off hyper-parameter $\beta_{\text{KL}} > 1$, the VAE framework reduces to $\beta$-VAE \cite{burgess2018understanding} that encourages disentanglement of latent factors $\*z$. The trainable parameters in the overall unrolling network $\mathcal{F}_{\boldsymbol{\theta}}$ are $\boldsymbol{\theta} = \{ \boldsymbol{\theta}_{\text{unroll}}, \boldsymbol{\theta}_{\text{TopoDiffVAE}} \}$, where $\boldsymbol{\theta}_{\text{unroll}} = \{ \alpha^{(0)}, \dots, \alpha^{(T-1)}, \beta^{(0)}, \dots, \beta^{(T-1)}, \gamma^{(0)}, \dots, \gamma^{(T-1)}\}$ and $\boldsymbol{\theta}_{\text{TopoDiffVAE}}$ includes the parameters in $f_{\text{emb}}$, $f_{\text{mean}}$, $f_{\text{cov}}$ and $f_{\text{dec}}$. The optimal mapping $\mathcal{F}^{\star}_{\boldsymbol{\theta}}$ is further used to learn graph topologies that are assumed to have the same structural properties as training graphs. \par

%Compared to conventional iterative algorithms, the proposed unrolling network brings two benefits: i) replacing the discrete-grid-search-based optimisation with continuous functional learning; and 2) introducing learning ability to promote flexibility. {\HC Compared to recent learning algorithms, the proposed unrolling network ***} \Note{sc: add comparison}
\section{Related Works}
\label{sec: related_work}
\paragraph{Graph Learning.} 
Broadly speaking there are two approaches to the problem of graph learning in the literature: model-based and learning-based approaches. The model-based methods solve an regularised convex optimisation whose objective function reflects the inductive graph-data interactions and regularisation that enforces embody structural priors \cite{Banerjee2008ModelData, DAspremont2008, Zhang2014, Friedman2008a,Lake2010a, Slawski2015a, Egilmez2017b,Deng2020, Kumar2019,Kalofolias2016,Dong2016a,berger2020efficient}.
%Early works, in the context of probabilistic graphical models, solve an $\ell_1$ regularised log-likelihood maximisation to obtain a sparse precision matrix \cite{Banerjee2008ModelData, DAspremont2008, Zhang2014, Friedman2008a}. Precision matrices are further restricted to Laplacian matrices \cite{Lake2010a, Slawski2015a, Egilmez2017b,Deng2020}, while more complex structural priors such as community structure are explored \cite{Kumar2019}. %In the field of graph signal processing, graph signals (data on nodes) can be seen as outcomes of diffusion processes on graphs and reconstructs a graph from signals according to the diffusion model \cite{thanou2017learning,Pasdeloup2018,Shafipour2018,segarra2017network}. 
%The authors in \cite{Kalofolias2016,Dong2016a,berger2020efficient} formulate the graph-data fidelity term using the quadratic Laplacian term which measures global smoothness of graph signals. 
%Since structural priors such as sparsity and non-negative edge weights are non-differentiable, 
They often rely on iterative algorithms which require specific designs, including proximal gradient descent \cite{guillot2012iterative}, alternating direction method of multiplier (ADMM) \cite{boyd2011distributed}, block-coordinate decent methods \cite{Friedman2008a} and primal-dual splitting \cite{Kalofolias2016}. Our model unrolls a primal-dual splitting as an inductive bias, and it allows for regularisation on both the edge weights and node degrees. \par

Recently, learning-based methods have been proposed to introduce more flexibility to model structural priors in graph learning. GLAD~\cite{Shrivastava2020GLAD} unrolls an alternating minimisation algorithm for solving an $\ell_1$ regularised log-likelihood maximisation of the precision matrix. It also allows the hyperparameters of the $\ell_1$ penalty term to differ element-wisely, which further enhances the representation capacity. %are further modelled with a multilayer perceptron and learned from graph samples. 
Our work differs from GLAD in that (1) we optimise over the space of weighted adjacency matrices that contain non-negative constraints than over the space of precision matrices; (2) we unroll a primal-dual splitting algorithm where the whole proximal projection step is replaced with TopoDiffVAE, while GLAD adds a step of learning element-wise hyperparameters. Furthermore, GLAD assumes that the precision matrices have a smallest eigenvalue of 1, which affects its %generalisation and 
accuracy on certain datasets. Deep-Graph \cite{belilovsky17a} uses a CNN architecture to learn a mapping from empirical covariance matrices to the binary graph structure. We will show in later sections that a lack of task-specific inductive bias leads to an inferior performance in recovering structured graphs.

\paragraph{Learning to optimise.} Learning to optimise (L2O) combines the flexible data-driven learning procedure and interpretable rule-based optimisation \cite{chen2021learning}. Algorithmic unrolling is one main approach of L2O \cite{monga2021algorithm}. Early development of unrolling was proposed to solve sparse and low-rank regularised problems and the main goal is to reduce the laborious iterations of hand engineering \cite{gregor2010learning, SprechmannLYBS13,moreau2017understanding, moreau2019learningstep}. Another approach is Play and Plug that plugs pre-trained neural networks in replacement of certain update steps in an iterative algorithm \cite{rick2017one, zhang2017learning}. Our framework is largely inspired by both approaches, and constitutes a first attempt in utilising L2O for the problem of graph learning. \par
 
%\paragraph{VAE-based graph generative models.} Our design of the TopoDiffVAE was inspired by previous works on VAEs~\cite{faez2020deep, guo2020systematic,simonovsky2018graphvae, kipf2016variational, jin2020-graph2graph}. It should be noted that the TopoDiffVAE does not generate a new graph structure; instead, it refines the graph estimation from gradient descent with structural enhancement, where the latent variable in the VAE is the continuous representation of structural differences. Also, we have a fixed node orderings. 

\section{Experiments} 
\label{sec: synexp}

\paragraph{General settings.} 
%\label{sec: syn_exp_setting}

% some settings can be places into appendix. 

%\paragraph{Synthetic data.} 
Random graphs are drawn from the Barab\'{a}si-Albert (BA), Erd\"{o}s-R\'{e}nyi (ER), stochastic block model (SBM) and Watts–Strogatz (WS) model to have scale-free, random, community-like, and small-world structural properties respectively. 
%We use the built-in algorithms in NetworkX \cite{hagberg2008exploring} to generate the corresponding binary adjacency matrices $\*A$. 
The parameters of each network model are chosen to yield an edge density in $[0.05, 0.1]$. Edge weights are drawn from a log-normal distribution $\log(\*W_{ij}) \sim \mathcal{N}(0, 0.1)$.
%, where a small variance is chosen to maximally preserve the structural property. 
The weighted adjacency matrix is set as $\*W = (\*W + \*W)^{\top}/2$ to enforce symmetry. We then generate graph-structured Gaussian samples (similar to \cite{Lake2010a}) with 
%using the regularised graph Laplacian as the groundtruth precision :
\begin{equation}
\label{eq:GL_objective_general}
    \*X \sim \mathcal{N}(\*0,\*K^{-1}), ~\*K = \*L + \sigma^2\*I, ~\text{where}~\sigma = 0.01.
\end{equation}%
\par

% \paragraph{Evaluation metrics} 
The out-of-sample normalised mean squared error for graph recovery (GMSE) and area under the curve (AUC) are reported to evaluate the prediction of edge weights and the recovery of the binary graph structure. Specifically, $\text{GMSE} = \frac{1}{n}\sum_{i=1}^n ||\hat{\*w}_i - \*w_i||^2_2 / ||\*w_i||^2_2$. The mean and 95\% confidence interval are calculated from 64 test samples. 
% AUC is defined in a binary classification scenario which automatically varies the threshold of weights above which the edges are declared as learned edges. An AUC close to 1 indicates the algorithm can precisely detect the ground-truth edges and non-edges.

% \paragraph{Model candidates.} 
We include the following baselines in the comparison: 1) \textbf{primal-dual splitting algorithm} \cite{Kalofolias2016, komodakis2015playing} that is unrolled into a neural network in our model; 2) the iterative algorithm of \textbf{ADMM} (details in Appendix). For PDS and ADMM, we tune the hyperparameters on training set via grid search based on GMSE.
%, where the criteria is GMSE between groundtruth and estimates. 
For the state-of-the-art models that share the idea of learning to optimise, we compare against 3) \textbf{GLAD\footnote{https://github.com/Harshs27/GLAD}} \cite{Shrivastava2020GLAD} 
%that unrolls an alternating minimisation for sparse inverse covariance estimation 
and 4) \textbf{Deep-graph\footnote{https://github.com/eugenium/LearnGraphDiscovery}} \cite{belilovsky17a} 
%that learns a CNN-structured network to map covariance matrices to graphs.
that are previously introduced in Section \ref{sec: related_work}. 
%It should be noted that Deep-graph only learns a binary graph structure, for which we mainly report the performance in terms of structural recovery, e.g. using AUC and graph statistics.
Besides the proposed 5) \textbf{learning to learn graph topologies (L2G)} framework that minimises the objective in Eq.(\ref{eq: loss}) to obtain a data-to-graph mapping, we also evaluate the performance of ablation models without TopoDiffVAE in replacement of the proximal operator. This model is equivalent to unrolling PDS directly while allowing parameters varying across the layers, which is referred to as 6) \textbf{Unrolling} in the following sections. We also consider 7) \textbf{Recurrent Unrolling} with shared parameters across the layers, i.e $\alpha^{(0)} = \alpha^{(1)} = \cdots  \alpha^{(T-1)}$, $\beta^{(0)} = \beta^{(1)} = \cdots  \beta^{(T-1)}$ and $\gamma^{(0)} = \gamma^{(1)} = \cdots  \gamma^{(T-1)}$. The loss function for Unrolling and Recurrent Unrolling reduces to the first term in Eq.(\ref{eq: loss}), i.e. without the KL divergence term of TopoDiffVAE. More implementation details, e.g. \textbf{model configurations} and \textbf{training settings}, can be found in Appendix. We also release the code for implementation \footnote{https://github.com/xpuoxford/L2G-neurips2021}.

\paragraph{Ability to recover graph topologies with specific structures.}

% How table1: GMSE is produced:
We evaluate the accuracy of recovering groundtruth graphs of different structural properties from synthetic data. We train an L2G for each set of graph samples that are generated from one particular random graph model, and report GMSE between the groundtruth graphs and the graph estimates on test data in Table \ref{table:syn_eval_gmse}. \par

\begin{table}[]
\caption{GMSE$^\ast$ in graph reconstruction}
\begin{threeparttable}
\footnotesize
\begin{tabular}{@{}lcccc@{}}
\toprule
\multicolumn{1}{c}{model/graph type} & Scale-free (BA) & Random sparse (ER) & Community (SBM)
& Small-world (WS) \\ \midrule
{\ull \textbf{Iterative algorithm:}}  &             &                 &                 &                  \\
\hspace{0.1cm}  ADMM   &  0.4094 $\pm$ .0120  &   0.3547 $\pm$ .0120   &  0.3168 $\pm$ .0226  &  0.2214 $\pm$  .0151               \\
\hspace{0.1cm} PDS   & 0.4033 $\pm$  .0072 & 0.3799 $\pm$ .0085  &  0.3111 $\pm$ .0147 &  0.2180 $\pm$ .0117    \\
{\ull \textbf{Learned to optimise:}}  &             &                 &                 &                  \\
\hspace{0.1cm}  GLAD\cite{Shrivastava2020GLAD}$^\ast$ (NMSE)               &  1.1230 $\pm$ .1756   &   1.1365 $\pm$ .1549              &   1.4231 $\pm$ .2625    &  0.9999 $\pm$ .0001         \\
\hspace{0.1cm}  Deep-graph$^{\ast\ast}$ \cite{belilovsky17a}                &  0.8423 $\pm$ .0026 & 0.8179 $\pm$ .0269  & 0.8931 $\pm$ .0103 &  0.8498 $\pm$ .0017     \\
{\ull \textbf{Proposed:}}             &             &                 &                 &                  \\
\hspace{0.1cm} Recurrent Unrolling & 0.3018 $\pm$ .0080 &  0.2885 $\pm$ .0075    & 0.2658 $\pm$ .0108  & 0.2007 $\pm$ .0081  \\
\hspace{0.1cm}  Unrolling                     & 0.1079 $\pm$ .0047   &  0.0898 $\pm$ .0067     & 0.1199 $\pm$ .0064     &   0.1028 $\pm$ .0073    \\
\hspace{0.1cm} L2G   & \bf{0.0594 $\pm$ .0044}           & \bf{0.0746 $\pm$ .0042}   & \bf{0.0735 $\pm$ .0051}  & \bf{0.0513 $\pm$ .0060}                  \\ \bottomrule
\end{tabular}
\begin{tablenotes}
  \footnotesize
  \item $~~^\ast$ For GLAD, we report test NMSE instead of GMSE so as to follow their original setting in \cite{Shrivastava2020GLAD}. \par 
  \hspace{0.3cm} GLAD is sensitive to the choice of $\sigma^2$ when generating samples (see Figure \ref{fig: syn_exp_GLAD}).  
  \item $^{\ast\ast}$ GMSE may not favour Deep-graph as it learns binary structures only (see AUC in Table \ref{table:syn_graph_statistics}).
\end{tablenotes}
\end{threeparttable}
\label{table:syn_eval_gmse}
\end{table}

% = describe Table 1: GMSE

\begin{table}[]
\caption{Structural Metrics of recovered binary graph structure}
\footnotesize
\begin{tabular}{@{}lccccc@{}}
\toprule
\multicolumn{1}{c}{metric / model} & groundtruth & PDS                  & Deep-Graph \cite{belilovsky17a}           & Unrolling            & L2G                  \\ \midrule
{\ull \textbf{Scale-free (BA):}}                 &    \multicolumn{1}{l}{}         & \multicolumn{1}{l}{} & \multicolumn{1}{l}{} & \multicolumn{1}{l}{} & \multicolumn{1}{l}{} \\
\hspace{0.1cm} AUC      &    -         & 0.934$\pm$.009      & 0.513$\pm$.017      & 0.993$\pm$.001      & \textbf{0.999$\pm$.000}    \\
\hspace{0.1cm} KS test score$^{\ast}$                      &  95.31\%           &  65.63\%                    &  59.38\%    &    93.75\%         &             \textbf{95.31\%}        \\
{\ull \textbf{Community (SBM):}}                &    \multicolumn{1}{l}{}         & \multicolumn{1}{l}{} & \multicolumn{1}{l}{} & \multicolumn{1}{l}{} & \multicolumn{1}{l}{} \\
\hspace{0.1cm} AUC     &  -  &  0.926$\pm$.006   &   0.586$\pm$.021   &    0.989$\pm$.002      &  \textbf{0.993$\pm$.002}  \\
\hspace{0.1cm} community score   & 0.463$\pm$.002  &     0.481$\pm$.002   &    0.343$\pm$.006  &     0.458$\pm$.001    & \textbf{0.469$\pm$.003}  \\
  {\ull \textbf{Small-world (WS):}} & \multicolumn{1}{l}{}    & \multicolumn{1}{l}{} & \multicolumn{1}{l}{} & \multicolumn{1}{l}{} & \multicolumn{1}{l}{} \\
 \hspace{0.1cm} AUC &  -    &   0.913$\pm$.007 &  0.529$\pm$.010 &  0.984$\pm$.005 &   \textbf{0.991$\pm$.000}  \\
\hspace{0.1cm} average shortest path  &   2.334$\pm$.028    &    2.656$\pm$.204 &    1.136$\pm$.008   &   2.328$\pm$.040 &  \textbf{2.331$\pm$.041}  \\
\hspace{0.1cm} clustering coefficient     & 0.323$\pm$.018  &   0.514$\pm$.035  &    0.906$\pm$.004  &   0.433$\pm$.016  &   \textbf{0.382$\pm$.014} \\ \bottomrule
\end{tabular}
\begin{tablenotes}
  \footnotesize
  \item $^\ast$ KS test score is the percentage of test graphs whose degree sequence passes the Kolmogorov \par
  -Smirnov test for goodness of fit of a power-law distribution, i.e. $p$-value > 0.05.
\end{tablenotes}
\label{table:syn_graph_statistics}
\end{table}

The proposed L2G reduces the GMSE by over 70\% from the iterative baseline PDS on all types of graphs. Significant drops from PDS to Recurrent Unrolling and to Unrolling are witnessed. The scenario of recovering the scale-free networks sees biggest improvement on GMSE, as they are relatively hard to learn with handcrafted iterative rules. As shown in Figure \ref{fig:heatmaps_BA} in Appendix, the power-law degree distribution associated with such networks is preserved in graph estimates from L2G, while PDS and Unrolling have difficulties to recognise small-degree nodes and nodes that are connected to many other nodes, respectively. PDS and Deep-Graph also fail the KS test, which measures how close the degrees of graph estimates follow a power-law distribution (see Table \ref{table:syn_graph_statistics} where a $p$-value less than 0.05 rejects the null hypothesis that the degrees follow a power-law distribution). We also test the models on SBM and WS graphs. Please see Appendix \ref{appendix:WS:SBM} for analysis.

\paragraph{Comparisons with SOTA.}

GLAD\cite{Shrivastava2020GLAD} and Deep-graph\cite{belilovsky17a} are the few state-of-the-art learning-based methods worth comparing to, but there are notable differences in their settings that affect experimental results. As discussed in Section \ref{sec: related_work}, the objective of GLAD\cite{Shrivastava2020GLAD} is to learn the precision matrix where the diagonal entries are considered. The output of Deep-graph\cite{belilovsky17a} is an undirected and unweighted (i.e. binary) graph structure. By comparison, the product of generic graph learning adopted in our work is a weighted adjacency matrix, where the diagonal entries are zeros due to no-self-loop convention. \par

For a fair comparison, we provide the following remarks. Firstly, we stick to their original loss function NMSE for training GLAD\cite{Shrivastava2020GLAD} and reporting the performance on test data in Table \ref{table:syn_eval_gmse}. The difference between GMSE and NMSE is that the latter considers the $\ell_2$ loss on diagonal entries of the precision matrix $\boldsymbol{\Theta}$ in GLAD\cite{Shrivastava2020GLAD}. Note that GMSE equals NMSE when applied to adjacency matrices which are the focus of the present paper. The experimental results show a far inferior performance of GLAD\cite{Shrivastava2020GLAD} if their NMSE is replaced with our GMSE.  
\begin{equation}
    \text{GMSE} = \frac{1}{n}\sum_{i=1}^n \frac{||\hat{\*w}_i - \*w_i||^2_2}{||\*w_i||^2_2}, \quad \text{NMSE} ~(\text{GLAD\cite{Shrivastava2020GLAD}}) = \frac{1}{n}\sum_{i=1}^n \frac{||\hat{\boldsymbol{\Theta}}_i - \boldsymbol{\Theta}_i||^F_2}{||\boldsymbol{\Theta}_i||^F_2} 
\end{equation}%

\begin{wrapfigure}{R}{0.4\textwidth}
\vspace{-0.6cm}
\centering
\includegraphics[width = 1\linewidth]{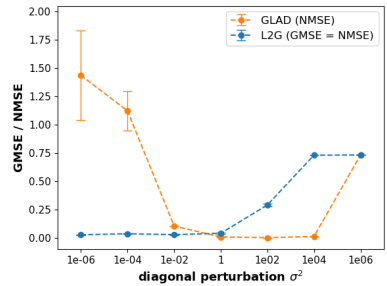}
\label{fig:exp_noisy_ER_APS}
\vspace{-0.4cm}
\caption{\footnotesize GMSE v.s. $\sigma^2$ for L2G/GLAD.}
\label{fig: syn_exp_GLAD}
\end{wrapfigure}

Secondly, a potential limitation of GLAD\cite{Shrivastava2020GLAD} is that it typically requires good conditioning of the precision matrix, i.e. a large $\sigma^2$ in Eq.(\ref{eq:GL_objective_general}). On the other hand, in the graph signal processing and graph learning literature \cite{Dong2016a, Kalofolias2016}, smooth graph signals follow a Gaussian distribution where the precision matrix is taken as the graph Laplacian matrix, which is a singular matrix with at least one zero eigenvalue. A large $\sigma^2$ is more likely to destroy the graph topological structure behind the data. Given that our objective is to learn a graph topology with non-negative edge weights, we report results in Table \ref{table:syn_graph_statistics} using a small 
$\sigma = 0.01$. In Figure \ref{fig: syn_exp_GLAD}, we see that GLAD\cite{Shrivastava2020GLAD} performs badly with small diagonal perturbation $\sigma^2$, while our L2G method performs stably with different $\sigma^2$. Admittedly, GLAD\cite{Shrivastava2020GLAD} outperforms L2G in terms of GMSE/NMSE under $1 \leq \sigma^2 \leq 10^{4}$ in Figure \ref{fig: syn_exp_GLAD}. However, this could be due to the fact that during training GLAD makes use of the groundtruth precision matrix which contains information of $\sigma^2$, while L2G does not require and hence is not available aware of this information. \par

%In terms of the suboptimal GMSE performance of GLAD, we find this is because GLAD is sensitive to the choice of $\sigma^2$ which determines the amount of regularisation added to the graph Laplacian. Given that the smallest eigenvalue of the graph Laplacian is zero, a small $\sigma^2$ leads to a large ratio between the largest to the smallest eigenvalues of the precision matrix, which may cause issues in covariance estimation. In \cite{Shrivastava2020GLAD}, a relatively large $\sigma^2$ is added to ensure the smallest eigenvalue of groundtruth precision matrices $\*K$ greater than 1, which limits the scope of their methodology. By comparison, we avoid such an issue by optimising over the half-vectorisation space directly. In Figure \ref{fig: syn_exp_GLAD}, we see that GLAD performs badly with small diagonal pertubation $\sigma^2$, while our L2G method performs stably with different $\sigma^2$. \par

Thirdly, we generate another synthetic dataset with binary groundtruth graphs to have a fairer comparison with Deep-graph\cite{belilovsky17a} and report the experimental results in Table \ref{table:appendix:binary} of Appendix \ref{appendix:binary}. We follow the original setting in the paper for training and inference. For both binary graphs or weighted graphs, Deep-graph\cite{belilovsky17a} has achieved inferior performance, showing that CNN has less expressiveness than the inductive bias introduced by model-based graph learning.

\paragraph{Analysis of model behaviour.}

% convergence
Figure \ref{fig:converge_baseline} shows the proposed models require far less iterations to achieve a big improvement on accuracy than the baseline models. Here, the number of iterations is referred to as the number of unrolling layers $T$ in Algorithm \ref{alg:unrolling}. This means the proposed models can be used to learn a graph from data with significant saving in computation time once trained. 
In Figure \ref{fig:converge_ablation}, we vary $T$ in both L2G and Unrolling. With the same $T$, L2G with TopoDiffVAE replacing the proximal operator in Unrolling achieves a lower GMSE at each layer. This suggests that TopoDiffVAE can help the unrolling algorithm to converge faster to a good solution (in this case a graph with desired structural properties). \par

In Figure \ref{fig:converge_grpah_size}, we train L2G and Unrolling on different sizes of graphs, where $m$ is the number of nodes. A larger graph requires more iterations to converge in both models, but the requirement is less for L2G than for Unrolling. One advantage of applying Unrolling is that it can be more easily transferred to learn a graph topology of different size. This is because the parameters learned in Unrolling are not size-dependent. Figure \ref{fig:converge_grpah_size} shows that with a large number of iterations (which will be costly to run), Unrolling can learn a effective data-to-graph mapping with a little sacrifice on GMSE. However, L2G significantly outperforms Unrolling in terms of preserving topological properties, which are hard to be reflected by GMSE. Unrolling might fail at detecting edges that are important to structural characteristics. For example, as shown in Figure \ref{fig:heatmap_WS_unrolling} and \ref{fig:heatmap_WS_L2G} in Appendix, rewired edges in WS graphs are missing from the results of Unrolling but successfully learned by L2G. In Appendix \ref{appendix:model:behaviour}, we discuss these results as well as scalability in more details.

\begin{figure*}[t]
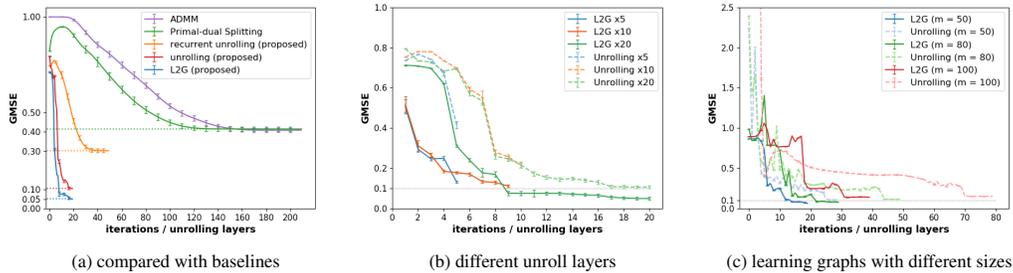

\centering
\begin{subfigure}[t]{0.33\textwidth}
    \centering
    \includegraphics[width = 1\linewidth]{Fig/num_iter_unrolls_graph_size_v2.png}
    \subcaption{compared with baselines}
    \label{fig:converge_baseline}
\end{subfigure}%
\begin{subfigure}[t]{0.33\textwidth}
    \centering
    \includegraphics[width = 1\linewidth]{Fig/num_iter_unrolls.png}
    \subcaption{different unroll layers}
    \label{fig:converge_ablation}
\end{subfigure}%
\begin{subfigure}[t]{0.33\textwidth}
    \centering
    \includegraphics[width = 1\linewidth]{Fig/num_iter_unrolls_graph_size.png}
    \subcaption{learning graphs with different sizes}
    \label{fig:converge_grpah_size}
\end{subfigure}
\caption{Convergence of number of iterations}
\label{fig:convergence_number of iterations}
\end{figure*}

\section{Real-world applications} \label{sec: realworldexp}

Many real-world graphs are found to have particular structural properties. Random graph models can generate graph samples with these structural properties. In this section, we show that L2G can be pretrained on these random graph samples and then transferred to real-world data to learn a graph that automatically incorporates the structural priors present in training samples.

\paragraph{S\&P 100 Stock Returns.} 
% \subsection{S\&P 100 Stock Returns.}
We apply a pretrained L2G on SBM graphs to a financial dataset where we aim to recover the relationship between S\&P 100 companies. We use the daily returns of stocks obtained from Yahoo\! Finance\footnote{https://pypi.org/project/yahoofinancials/}. 
%The graph nodes are the 100 companies in 11 sectors, and the graph signals are the normalized daily log returns from 1 May 2016 to 30 April 2021. 
%All the information and data are obtained from obtained from Yahoo! Finance. 
%We expect the learned graph to exhibit some level of community structure, hence we pretrain an L2G on SBM graphs described in Section \ref{sec: syn_exp_setting}.
Figure \ref{fig:sp100} gives a visualisation of the estimated graph adjacency matrix where the rows and columns are sorted by sectors. The heatmap clearly shows that two stocks in the same sectors are likely to behave similarly. Some connections between the companies across sectors are also detected, such as the link between CVS Health (38 CVS) and Walgreens Boosts Alliance (78 WBA). This is intuitive as both are related to health care. More details can be found in Appendix.

\begin{figure}[t]
    \centering
    \includegraphics[width = 0.7\linewidth]{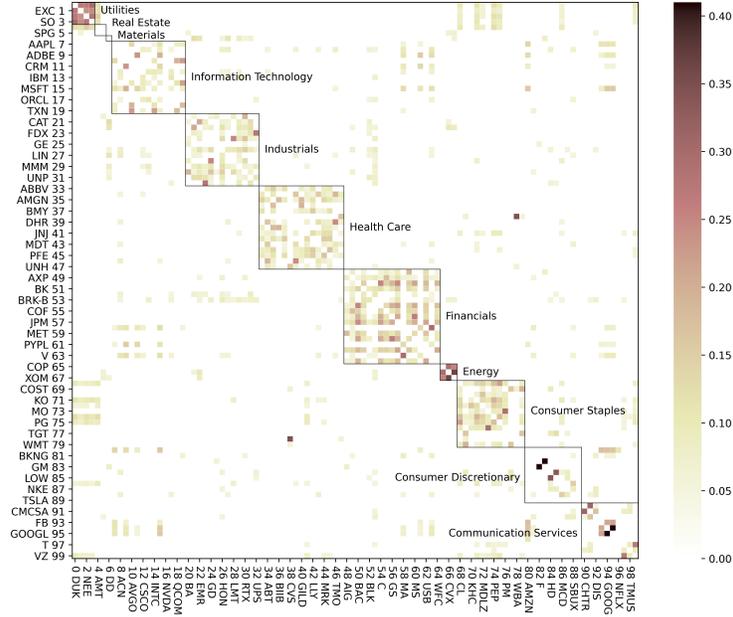}
    \caption{The learned graph adjacency matrix by L2G that reveals the relationship between S\&P 100 companies. The rows and columns are sorted by sectors.}
    \label{fig:sp100}
\end{figure}

\paragraph{Assistant Diagnosis of Autism.} 
L2G is pretrained with sparse ER graphs and then transferred to learn the brain functional connectivity of Autism from blood-oxygenation-level-dependent (BOLD) time series. We collect data from the 
dataset\footnote{http://preprocessed-connectomes-project.org/abide/}, which contains 539 subjects with autism spectrum disorder and 573 typical controls. We extract BOLD time series from the fMRI images by using atlas with 39 regions of interest~\cite{abideatlas}.
%implemented by theNilearn~\footnote{http://nilearn.github.io/} package, %and use them as the graph signals. 
%and apply the pretrained L2G model to these time series to learn the functional connectivity between different brain regions. 
Figure \ref{fig:connectivity} shows that the learned connectivity of the 39 regions using data from 35 subjects in the autistic group (Figure \ref{fig:autistic}) and 35 subjects in the control group (Figure \ref{fig:control}). 
The functional connectivity of two groups are apparently different, which provides reference for judging and analysing autism spectrum disorder. The difference is also consistent with current studies showing the underconnectivity in ASD~\cite{KANA2011410}. 
The stability analysis of the recovered graph structure is presented in Appendix, where our method is shown to be more stable than other methods.

%More implementation details can be found in Appendix.

\begin{figure*}[t]
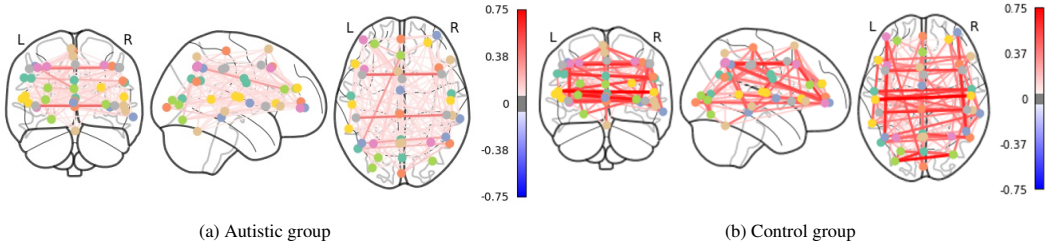

\centering
\begin{subfigure}[t]{0.5\textwidth}
    \centering
    \includegraphics[width = 1\linewidth]{Fig/abide-a.jpg}
    \subcaption{Autistic group}
    \label{fig:autistic}
\end{subfigure}%
\begin{subfigure}[t]{0.5\textwidth}
    \centering
    \includegraphics[width = 1\linewidth]{Fig/abide-c.png}
    \subcaption{Control group}
    \label{fig:control}
\end{subfigure}
\caption{The connectivity of the 39 regions in brain estimated by L2G using 35 subjects.}
\label{fig:connectivity}
\end{figure*}

\section{Conclusion}
\label{sec: condlusion}

In this paper, we utilise the technique of learning to optimise to learn graph topologies from node observations. Our method is able to learn a data-to-graph mapping, which leverages the inductive bias provided by a classical iterative algorithm that solves structural-regularised Laplacian quadratic minimisation. We further propose a topological difference variational autoencoder to enhance the expressiveness and flexibility of the structural regularisers. 
% We show that, once trained, the model can be transferred to learn a graph topology with the same structural property as the training graphs.
Limitations of the current method include its focus on unrolling an algorithm for learning undirected graph, and scalability to learning large-scale graphs. We leave these directions as future work.

\section*{Acknowledgements}

% Funding in direct support of this work: the GPU service (Amazon AWS EC2 G4dn-2xlarge) and Xingyue Pu's scholarship are provided by the Oxford-Man Institute of Quantitative Finance.
X.P. and X.D. gratefully acknowledge support from the Oxford-Man Institute of Quantitative Finance. S.C gratefully acknowledges support from 
the National Natural Science Foundation of China under Grant 6217010074 and the Science and Technology Commission of Shanghai Municipal under Grant 21511100900. The authors thank the anonymous reviewers for valuable feedback, and Ammar Mahran, Henry Kenlay and Yin-Cong Zhi for their helpful comments.

\bibliographystyle{plain}
\bibliography{}

\begin{thebibliography}{10}

\bibitem{moreau2019learningstep}
Pierre Ablin, Thomas Moreau, Mathurin Massias, and Alexandre Gramfort.
\newblock Learning step sizes for unfolded sparse coding.
\newblock In {\em Advances in Neural Information Processing Systems},
  volume~32, 2019.

\bibitem{Banerjee2008ModelData}
Onureena Banerjee, Laurent El~Ghaoui, and Alexandre D'Aspremont.
\newblock {Model selection through sparse maximum likelihood estimation for
  multivariate Gaussian or binary data}.
\newblock {\em Journal of Machine Learning Research}, 9:485--516, 2008.

\bibitem{Amir2017}
Amir Beck.
\newblock {\em First-Order Methods in Optimization}.
\newblock SIAM-Society for Industrial and Applied Mathematics, Philadelphia,
  PA, USA, 2017.

\bibitem{belilovsky17a}
Eugene Belilovsky, Kyle Kastner, Gael Varoquaux, and Matthew~B. Blaschko.
\newblock Learning to discover sparse graphical models.
\newblock In {\em Proceedings of the 34th International Conference on Machine
  Learning}, volume~70, pages 440--448. PMLR, 06--11 Aug 2017.

\bibitem{berger2020efficient}
Peter Berger, Gabor Hannak, and Gerald Matz.
\newblock Efficient graph learning from noisy and incomplete data.
\newblock {\em IEEE Transactions on Signal and Information Processing over
  Networks}, 6:105--119, 2020.

\bibitem{boyd2011distributed}
Stephen Boyd, Neal Parikh, and Eric Chu.
\newblock {\em Distributed optimization and statistical learning via the
  alternating direction method of multipliers}.
\newblock Now Publishers Inc, 2011.

\bibitem{burgess2018understanding}
Christopher~P Burgess, Irina Higgins, Arka Pal, Loic Matthey, Nick Watters,
  Guillaume Desjardins, and Alexander Lerchner.
\newblock Understanding disentangling in $\beta$-vae.
\newblock In {\em Proceedings of the 31th International Conference on Neural
  Information Processing Systems}, 2017.

\bibitem{chen2021learning}
Tianlong Chen, Xiaohan Chen, Wuyang Chen, Howard Heaton, Jialin Liu, Zhangyang
  Wang, and Wotao Yin.
\newblock Learning to optimize: A primer and a benchmark.
\newblock {\em arXiv preprint arXiv:2103.12828}, 2021.

\bibitem{DAspremont2008}
Alexandre D'Aspremont, Onureena Banerjee, and Laurent El~Ghaoui.
\newblock {First-order methods for sparse covariance selection}.
\newblock {\em SIAM Journal on Matrix Analysis and Applications}, 30(1):56--66,
  2008.

\bibitem{Deng2020}
Zengde Deng and Anthony Man-Cho So.
\newblock {A Fast Proximal Point Algorithm for Generalized Graph Laplacian
  Learning}.
\newblock {\em ICASSP 2020, IEEE International Conference on Acoustics, Speech
  and Signal Processing - Proceedings}, pages 5425--5429, 2020.

\bibitem{Dong2016a}
Xiaowen Dong, Dorina Thanou, Pascal Frossard, and Pierre Vandergheynst.
\newblock {Learning Laplacian Matrix in Smooth Graph Signal Representations}.
\newblock {\em IEEE Trans. Signal Process.}, 64(23):6160--6173, 2016.

\bibitem{Egilmez2017b}
Hilmi~E. Egilmez, Eduardo Pavez, and Antonio Ortega.
\newblock {Graph Learning from Data under Laplacian and Structural
  Constraints}.
\newblock {\em IEEE Journal on Selected Topics in Signal Processing},
  11(6):825--841, 2017.

\bibitem{Friedman2008a}
Jerome Friedman, Trevor Hastie, and Robert Tibshirani.
\newblock {Sparse inverse covariance estimation with the graphical lasso}.
\newblock 9(3):432--441, 2008.

\bibitem{gregor2010learning}
Karol Gregor and Yann LeCun.
\newblock Learning fast approximations of sparse coding.
\newblock In {\em Proceedings of the 27th international conference on
  international conference on machine learning}, pages 399--406, 2010.

\bibitem{guillot2012iterative}
Dominique Guillot, Bala Rajaratnam, Benjamin~T Rolfs, Arian Maleki, and Ian
  Wong.
\newblock Iterative thresholding algorithm for sparse inverse covariance
  estimation.
\newblock In {\em Proceedings of the 25th International Conference on Neural
  Information Processing Systems}, 2012.

\bibitem{Kalofolias2016}
Vassilis Kalofolias.
\newblock {How to learn a graph from smooth signals}.
\newblock {\em Proc. 19th Int. Conf. Artif. Intell. Stat. AISTATS 2016}, jan
  2016.

\bibitem{KANA2011410}
Rajesh~K. Kana, Lauren~E. Libero, and Marie~S. Moore.
\newblock Disrupted cortical connectivity theory as an explanatory model for
  autism spectrum disorders.
\newblock {\em Physics of Life Reviews}, 8(4):410--437, 2011.

\bibitem{adam2015}
Diederik~P. Kingma and Jimmy Ba.
\newblock Adam: A method for stochastic optimization.
\newblock In {\em ICLR (Poster)}, 2015.

\bibitem{Kingma2014}
Diederik~P. Kingma and Max Welling.
\newblock Auto-encoding variational bayes.
\newblock In {\em Proceedings of the 2nd International Conference on Learning
  Representations (ICLR)}, 2014.

\bibitem{kipf2016semi}
Thomas~N Kipf and Max Welling.
\newblock Semi-supervised classification with graph convolutional networks.
\newblock In {\em International Conference on Learning Representations}, 2017.

\bibitem{komodakis2015playing}
Nikos Komodakis and Jean-Christophe Pesquet.
\newblock Playing with duality: An overview of recent primal? dual approaches
  for solving large-scale optimization problems.
\newblock {\em IEEE Signal Processing Magazine}, 32(6):31--54, 2015.

\bibitem{Kumar2019}
Sandeep Kumar, Jiaxi Ying, Jose~Vinicius de~Miranda~Cardoso, and Daniel
  Palomar.
\newblock Structured graph learning via laplacian spectral constraints.
\newblock In H.~Wallach, H.~Larochelle, A.~Beygelzimer, F.~d\textquotesingle
  Alch\'{e}-Buc, E.~Fox, and R.~Garnett, editors, {\em Advances in Neural
  Information Processing Systems}, volume~32. Curran Associates, Inc., 2019.

\bibitem{Lake2010a}
Brenden~M Lake and Joshua~B Tenenbaum.
\newblock {Discovering Structure by Learning Sparse Graphs}.
\newblock {\em Proceedings of the 32nd Annual Conference of the Cognitive
  Science Society}, 2010.

\bibitem{lauritzen1996graphical}
Steffen~L Lauritzen.
\newblock {\em Graphical models}.
\newblock Clarendon Press, 1996.

\bibitem{Zhang2014}
Zhaosong Lu.
\newblock {Adaptive first-order methods for general sparse inverse covariance
  selection}.
\newblock {\em SIAM Journal on Matrix Analysis and Applications},
  31(4):2000--2016, 2009.

\bibitem{monga2021algorithm}
Vishal Monga, Yuelong Li, and Yonina~C Eldar.
\newblock Algorithm unrolling: Interpretable, efficient deep learning for
  signal and image processing.
\newblock {\em IEEE Signal Processing Magazine}, 38(2):18--44, 2021.

\bibitem{moreau2017understanding}
Thomas Moreau and Joan Bruna.
\newblock Understanding the learned iterative soft thresholding algorithm with
  matrix factorization.
\newblock In {\em Proceedings of the International Conference on Learning
  Representations (ICLR)}, 2017.

\bibitem{Ortega2018}
Antonio Ortega, Pascal Frossard, Jelena Kova{\v{c}}evi{\'c}, Jos{\'e}~MF Moura,
  and Pierre Vandergheynst.
\newblock Graph signal processing: Overview, challenges, and applications.
\newblock {\em Proceedings of the IEEE}, 106(5):808--828, 2018.

\bibitem{rick2017one}
JH~Rick~Chang, Chun-Liang Li, Barnabas Poczos, BVK Vijaya~Kumar, and Aswin~C
  Sankaranarayanan.
\newblock One network to solve them all--solving linear inverse problems using
  deep projection models.
\newblock In {\em Proceedings of the IEEE International Conference on Computer
  Vision}, pages 5888--5897, 2017.

\bibitem{sandryhaila2013discrete}
Aliaksei Sandryhaila and Jos{\'e}~MF Moura.
\newblock Discrete signal processing on graphs.
\newblock {\em IEEE transactions on signal processing}, 61(7):1644--1656, 2013.

\bibitem{Shrivastava2020GLAD}
Harsh Shrivastava, Xinshi Chen, Binghong Chen, Guanghui Lan, Srinivas Aluru,
  Han Liu, and Le~Song.
\newblock Glad: Learning sparse graph recovery.
\newblock In {\em International Conference on Learning Representations}, 2020.

\bibitem{shuman2013emerging}
David Shuman, Sunil Narang, Pascal Frossard, Antonio Ortega, and Pierre
  Vandergheynst.
\newblock The emerging field of signal processing on graphs: Extending
  high-dimensional data analysis to networks and other irregular domains.
\newblock {\em IEEE Signal Processing Magazine}, 3(30):83--98, 2013.

\bibitem{Slawski2015a}
Martin Slawski and Matthias Hein.
\newblock {Estimation of positive definite M-matrices and structure learning
  for attractive Gaussian Markov random fields}.
\newblock {\em Linear Algebra and Its Applications}, 473:145--179, 5 2015.

\bibitem{SprechmannLYBS13}
Pablo Sprechmann, Roee Litman, Tal Ben~Yakar, Alexander~M Bronstein, and
  Guillermo Sapiro.
\newblock Supervised sparse analysis and synthesis operators.
\newblock In {\em Advances in Neural Information Processing Systems},
  volume~26, pages 908--916, 2013.

\bibitem{abideatlas}
Gael Varoquaux, Alexandre Gramfort, Fabian Pedregosa, Vincent Michel, and
  Bertrand Thirion.
\newblock Multi-subject dictionary learning to segment an atlas of brain
  spontaneous activity.
\newblock In G{\'a}bor Sz{\'e}kely and Horst~K. Hahn, editors, {\em Information
  Processing in Medical Imaging}, pages 562--573, Berlin, Heidelberg, 2011.
  Springer Berlin Heidelberg.

\bibitem{watts1998collective}
Duncan~J Watts and Steven~H Strogatz.
\newblock Collective dynamics of ‘small-world’networks.
\newblock {\em nature}, 393(6684):440--442, 1998.

\bibitem{zhang2017learning}
Kai Zhang, Wangmeng Zuo, Shuhang Gu, and Lei Zhang.
\newblock Learning deep cnn denoiser prior for image restoration.
\newblock In {\em Proceedings of the IEEE conference on computer vision and
  pattern recognition}, pages 3929--3938, 2017.

\end{thebibliography}

\clearpage
%%%%%%%%%%%%%%%%%%%%%%%%%%%%%%%%%%%%%%%%%%%%%%%%%%%%%%%%%%%%

\clearpage

\begin{appendices}

\section{Algorithms of model-based graph learning}

\subsection{An ADMM approach for solving model-based graph learning}

Besides PDS in Algorithm \ref{alg:PDS}, we also consider an alternating direction method of multiplier (ADMM) approach in Algorithm \ref{alg:ADMM} for solving the model-based graph learning problem in Eq.(\ref{eq: graph_leanring_objective_with_log}) as a baseline. The algorithm is specifically derived from Algorithm 1 in \cite{komodakis2015playing}, where we use $\*v$ to denote the dual variable.
%in replacement of $\gamma \*z$ in Algorithm 1\cite{komodakis2015playing}. 
Compared to PDS, ADMM has another relaxation factor $\{\lambda^{(t)}\}_{t\in \mathbb{N}}$. From grid search, $\lambda^{(t)}$ is normally taken a value in $[1.5, 2]$ for all $t$. Also, the update steps are not entirely separated in ADMM. As shown in Figure \ref{fig:converge_baseline}, it converges more slowly than PDS. Therefore, we choose PDS in the unrolling framework. \par

{
\centering
\begin{minipage}{0.8\textwidth}
\centering
    \begin{algorithm}[H]
    \caption{ADMM}
    \label{alg:ADMM}
    \begin{algorithmic}[1]
    \renewcommand{\algorithmicrequire}{\textbf{Input:}}
     \renewcommand{\algorithmicensure}{\textbf{Output:}}
     \REQUIRE $\*y$, $\gamma$, $\{\lambda^{(t)}\}_{t\in \mathbb{N}}$, $\alpha$, $\beta$ and $\mathcal{D}$.
    \STATE \textbf{Initialisation: $\*w^{(0)} = 0$, $\*v^{(0)} = 0$} \par
    \WHILE {$|\*w^{(t)} - \*w^{(t-1)}| > \epsilon$}
    
      \STATE $\*r_1^{(t)} = \*w^{(t)} - \gamma ( 2 \beta \*w^{(t)} + 2\*y + \mathcal{D}^T \*v^{(t)})$
    
      \STATE $\*p_1^{(t)} = \text{prox}_{\gamma,  \Omega_1}\big(\*r_1^{(t)} \big) = \max \{ \*0, \*r_1^{(t)}\}$
      
      \STATE $\*r_2^{(t)} =\*v^{(t)} + \gamma \mathcal{D} (2\*r_1^{(t)} - \*w^{(t)})$
      
      \STATE $\*p_2^{(t)} = \text{prox}_{\gamma, \Omega_2}\big( \*r_2^{(t)} \big)$, where $\big(\text{prox}_{\gamma, \Omega_2}(\*r_2) \big)_i = \frac{r_{2,i} - \sqrt{r_{2,i}^2 + 4\alpha \gamma}}{2}$

      \STATE $\*w^{(t+1)} = \*w^{(t)} + \lambda^{(t)}( \*p_1^{(t)} - \*w^{(t)})$
      \STATE $\*v^{(t+1)} = \*v^{(t)} +  \lambda^{(t)}( \*p_2^{(t)} - \*v^{(t)})$
    \ENDWHILE
    \RETURN $\*w^{(t)} = \hat{\*w}$
\end{algorithmic} 
\end{algorithm}
\end{minipage}
\par
}

\subsection{Derivations for PDS in Algorithm \ref{alg:PDS}}

\label{appendix:derivations4PDS}

The derivations for Step 5 and Step 6 in Algorithm \ref{alg:PDS} are as follows. 

\begin{lemma}
The proximal projection step for primal variable (i.e. Step 5) in Algorithm \ref{alg:PDS} is $\text{prox}_{\gamma \Omega_1}(r_1^{(t)}) = \max\{0, r_1^{(t)}\}$.
\end{lemma}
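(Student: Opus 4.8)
The plan is to unwind the definition of the proximal operator and, as a preliminary bookkeeping step, to identify exactly which non-smooth term is being projected in Step~5. The key observation is that the differentiable part $2\*w^\top\*y+\beta\|\*w\|_2^2$ (together with the linear coupling $\mathcal{D}^\top\*v$) is already absorbed into the \emph{forward} gradient step: its gradient $2\*y+2\beta\*w$ appears explicitly in Step~3. Consequently, the only term left for the \emph{backward} proximal step on the primal variable is the non-negativity constraint $\mathcal{I}_{\{\*w\ge 0\}}$, so the object to analyse is $\text{prox}_{\gamma\,\mathcal{I}_{\{\cdot\ge 0\}}}(\*r_1^{(t)})$, i.e. the quadratic $\Omega_1=\beta\|\*w\|_2^2$ plays no role in this projection despite the subscript notation.

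First I would recall that for any proper, closed, convex $g$,
$$\text{prox}_{\gamma g}(\*r)=\arg\min_{\*x}\Big\{g(\*x)+\tfrac{1}{2\gamma}\|\*x-\*r\|_2^2\Big\}.$$
Substituting $g=\mathcal{I}_{\{\*x\ge 0\}}$ makes the objective $+\infty$ off the non-negative orthant and $\tfrac{1}{2\gamma}\|\*x-\*r\|_2^2$ on it, so the proximal operator collapses to the Euclidean projection $\Pi_{\{\*x\ge 0\}}(\*r)=\arg\min_{\*x\ge 0}\|\*x-\*r\|_2^2$. Note that the positive factor $1/(2\gamma)$ does not affect the minimiser, which is why the result is independent of the step size $\gamma$.

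Next I would exploit separability: since $\|\*x-\*r\|_2^2=\sum_i (x_i-r_i)^2$, the projection decouples into independent scalar problems $\min_{x_i\ge 0}(x_i-r_i)^2$. Each is solved by inspection — if $r_i\ge 0$ the unconstrained optimum $x_i=r_i$ is feasible, while if $r_i<0$ the constrained optimum is the boundary point $x_i=0$ — giving $x_i^{\star}=\max\{0,r_i\}$ coordinate-wise, hence $\text{prox}(\*r)=\max\{\*0,\*r\}$. Applying this with $\*r=\*r_1^{(t)}$ yields the claim.

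I do not anticipate a genuine analytic obstacle: once separability is invoked, each scalar subproblem is a one-line inspection (or KKT) argument. The only point requiring care is the bookkeeping flagged at the outset — making explicit that $\beta\|\*w\|_2^2$ is handled by the forward step, so that the primal proximal operator in Step~5 acts on the non-negativity indicator rather than on $\Omega_1$ itself, which is precisely what makes the elementwise $\max$ the correct answer.
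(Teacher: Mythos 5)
Your proof is correct and follows essentially the same route as the paper's: both identify the function being projected in Step~5 as the indicator of the non-negative orthant (the paper silently reuses the symbol $\Omega_1$ for this indicator, whereas you flag explicitly that the quadratic $\beta\|\*w\|_2^2$ is absorbed into the forward gradient step), observe that the step size $\gamma$ is immaterial, and conclude that the proximal operator is the Euclidean projection $\max\{\*0,\*r_1^{(t)}\}$. Your version is slightly more complete in that it actually derives the elementwise $\max$ via separability rather than asserting it.
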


\begin{proof}
From Algorithm 6 in \cite{komodakis2015playing}, the proximal projection step for primal variable (i.e. Step 5 in Algorithm \ref{alg:PDS}) is $p_1^{(t)} = \text{prox}_{\gamma \Omega_1}(r_1^{(t)})$. $\Omega_1(r_1^{(t)}) = 0$ if $r_1^{(t)} \geq 0$ else $\Omega_1(r_1^{(t)}) = \infty$. We have $\gamma \Omega_1(r_1^{(t)}) =  \Omega_1(r_1^{(t)})$ and hence $\text{prox}_{\gamma \Omega_1}(r_1^{(t)}) = \text{prox}_{\Omega_1}(r_1^{(t)}) = \max\{0, r_1^{(t)}\}$. 
\end{proof}

\begin{lemma}
The proximal projection step for dual variable (i.e. Step 6) in Algorithm \ref{alg:PDS} is $\text{prox}_{\gamma \Omega^{*}_2}(r_2^{(t)}) =  \frac{r_2^{(t)} - \sqrt{r_2^{(t)2} + 4 \alpha \gamma}}{2}$.
\end{lemma}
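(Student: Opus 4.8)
The plan is to recognise that the dual proximal step in a forward-backward-forward primal-dual splitting uses the \emph{convex conjugate} $\Omega_2^*$, and to evaluate $\text{prox}_{\gamma\Omega_2^*}$ by reducing it to the proximal operator of $\Omega_2$ itself via the Moreau decomposition, followed by a one-dimensional quadratic minimisation. First I would note that $\Omega_2(\mathbf{u}) = -\alpha\mathbf{1}^\top\log(\mathbf{u})$ is separable: it is a sum of scalar functions $\phi(u_i) = -\alpha\log(u_i)$, each proper, closed and convex on $(0,\infty)$. Consequently every proximal operator involved acts coordinatewise, so it suffices to treat the scalar function $\phi$ and then read the result entrywise, which matches the componentwise form $\big(\text{prox}_{\gamma\Omega_2^*}(r_2)\big)_i$ in the statement.

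Next I would invoke the scaled Moreau decomposition, which for any proper closed convex $f$ and $\gamma>0$ reads $\text{prox}_{\gamma f}(x) + \gamma\,\text{prox}_{\gamma^{-1}f^*}(\gamma^{-1}x) = x$. Applying this with $f = \Omega_2^*$ and using $f^* = \Omega_2^{**} = \Omega_2$ gives
\[
\text{prox}_{\gamma\Omega_2^*}(r_2) = r_2 - \gamma\,\text{prox}_{\gamma^{-1}\Omega_2}(r_2/\gamma),
\]
which trades the (more awkward) conjugate for the prox of $\Omega_2$ that I already know how to compute. Then I would evaluate the scalar prox of $\Omega_2$ directly: $\text{prox}_{\gamma\Omega_2}(r) = \arg\min_{u>0}\{-\gamma\alpha\log(u) + \tfrac12(u-r)^2\}$, whose first-order condition is the quadratic $u^2 - r u - \gamma\alpha = 0$; since the domain is $u>0$, I select the positive root $u = \tfrac12\big(r + \sqrt{r^2 + 4\gamma\alpha}\big)$. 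Substituting into the Moreau formula with parameter $\gamma^{-1}$ and argument $r/\gamma$, and simplifying $\sqrt{r^2/\gamma^2 + 4\alpha/\gamma} = \gamma^{-1}\sqrt{r^2 + 4\alpha\gamma}$ (using $\gamma>0$), yields exactly
\[
\text{prox}_{\gamma\Omega_2^*}(r_2) = \frac{r_2 - \sqrt{r_2^2 + 4\alpha\gamma}}{2}.
\]

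The main things to get right are the bookkeeping in the scaled Moreau identity together with the conjugacy $\Omega_2^{**}=\Omega_2$, and the correct root selection, which is dictated by the domain ($u>0$ for $\Omega_2$). As an alternative that avoids Moreau altogether, one could compute $\Omega_2^*$ explicitly — the conjugate of $-\alpha\log$ is finite only on the negative orthant and equals $-\alpha\log(-s)$ up to an additive constant — and minimise $\gamma\Omega_2^*(s) + \tfrac12(s-r)^2$ directly; this produces the same quadratic $s^2 - r s - \gamma\alpha = 0$, now with the \emph{negative} root selected by the constraint $s<0$, giving the identical expression. I expect no serious obstacle beyond keeping the proximal scaling convention consistent; the remaining algebra is routine.
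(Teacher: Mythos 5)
Your proof is correct and follows essentially the same route as the paper's: both reduce $\text{prox}_{\gamma\Omega_2^*}$ to the prox of $\Omega_2$ via the extended Moreau decomposition $\text{prox}_{\gamma\Omega_2^*}(r) = r - \gamma\,\text{prox}_{\gamma^{-1}\Omega_2}(r/\gamma)$ and then use the positive root of the quadratic $u^2 - ru - \gamma\alpha = 0$ for the prox of the negative log. The only cosmetic difference is that the paper rewrites $\gamma\,\text{prox}_{\gamma^{-1}\Omega_2}(r/\gamma)$ as $\text{prox}_{\gamma\Omega_2}(r)$ using the scaling theorem together with the fact that $\log(r/\gamma)$ differs from $\log(r)$ by a constant, whereas you carry out the substitution and simplify the square root directly; both are sound.
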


\begin{proof}
From Algorithm 6 in \cite{komodakis2015playing}, the proximal projection step for dual variable for our case is $p_2^{(t)} = \text{prox}_{\gamma \Omega^{*}_2}(r_2^{(t)})$, where $\Omega^{*}_2$ is the conjugate of $\Omega_2$. Denote $h(r_2^{(t)}) = \gamma \Omega_2(\frac{r_2^{(t)}}{\gamma})$. We have $h(r_2^{(t)}) = \gamma (  -\alpha 1^T \log(\frac{r_2^{(t)}}{\gamma}) ) =\gamma ( -\alpha 1^T \log(r_2^{(t)}) + \alpha 1^T\log \gamma) = \gamma (\Omega_2(r_2^{(t)})+ \text{constant})$. By the affine rule of proximal operators, $\text{prox}_{h}(r_2^{(t)}) = \text{prox}_{\gamma \Omega_2}(r_2^{(t)})$. Meanwhile, by Theorem 6.9 of \cite{Amir2017}, $\text{prox}_{\gamma \Omega_2}(r_2^{(t)}) = \frac{r_2^{(t)} + \sqrt{r_2^{(t)2} + 4 \alpha \gamma}}{2}$. By Theorem 6.12 of \cite{Amir2017}, if $h(r_2^{(t)}) = \gamma \Omega_2(\frac{r_2^{(t)}}{\gamma})$, then $\text{prox}_{h}(r_2^{(t)}) = \gamma \text{prox}_{\frac{1}{\gamma} \Omega_2}(\frac{r_2^{(t)}}{\gamma})$. We thus have $\text{prox}_{h}(r_2^{(t)}) = \text{prox}_{\gamma \Omega_2}(r_2^{(t)}) = \gamma \text{prox}_{\frac{1}{\gamma} \Omega_2}(\frac{r_2^{(t)}}{\gamma})$. Plugging it back to the Moreau decomposition leads to $\text{prox}_{\gamma \Omega^{*}_2}(r_2^{(t)}) = r_2^{(t)} - \gamma \text{prox}_{\frac{1}{\gamma} \Omega_2}(\frac{r_2^{(t)}}{\gamma}) = r_2^{(t)} - \text{prox}_{\gamma \Omega_2}(r_2^{(t)}) = r_2^{(t)} -  \frac{r_2^{(t)} + \sqrt{r_2^{(t)2} + 4 \alpha \gamma}}{2} =  \frac{r_2^{(t)} - \sqrt{r_2^{(t)2} + 4 \alpha \gamma}}{2}$.
\end{proof}

\section{Experimental details}

\subsection{Model Configurations}

\begin{figure}[h!]
    \centering
    \includegraphics[width=1\linewidth]{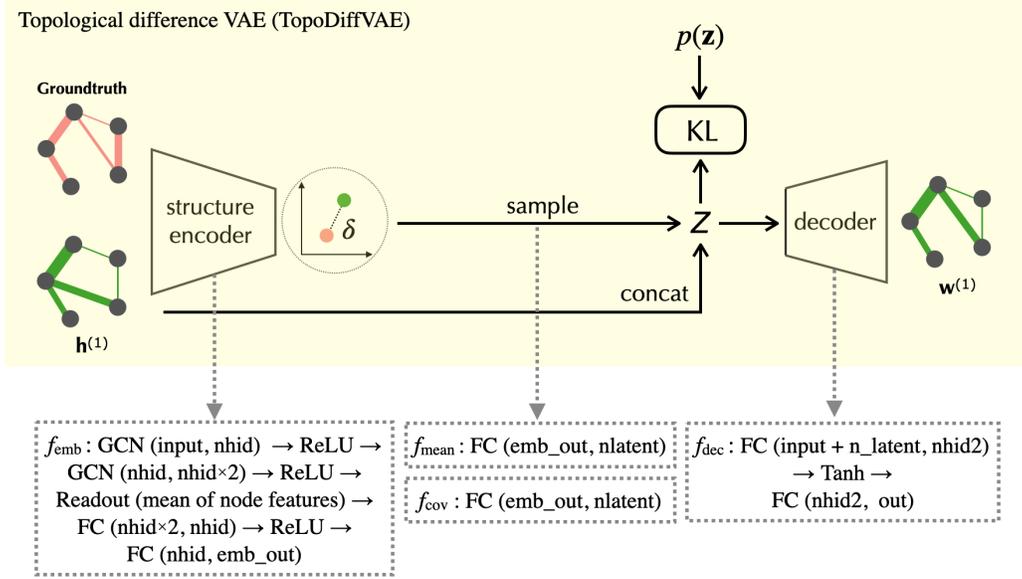}
    \caption{Model Configurations of TopoDiffVAE in L2G.}
    \label{fig: illustration_model_config}
\end{figure}

For the experiments in Section \ref{sec: synexp}, the configurations for the proposed L2G are illustrated in Figure \ref{fig: illustration_model_config}. The notations for neural network modules are defined as
\begin{itemize}
    \item $\text{GCN}(a, b)$: graph convolution with the learnable matrix $\*H$ in a dimension of $a \times b$ in Eq.\ref{eq: encoder_femb}.
    \item Readout: a pooling layer in GCN that takes the average of node embeddings. 
    \item $\text{FC}(c, d)$: a fully connected layer with the input dimension $c$ and the output dimension $d$.
    \item ReLU, Tanh: the activation functions.
\end{itemize}%
The number of hidden neurons (nhid and nhid2), the dimension of the graph embedding (emb\_out) and that of the latent code $\*z$ (nlatent) are tuned for different types and sizes of graphs. For learning graphs with a size of $m = 20$ in Section \ref{sec: synexp}, we set $\text{nhid} = 64$, $\text{nhid2} = 256$ and $|\*z| = 16$. The input and output size are fixed to be $m \times (m-1)/2$.\par

% we only use one layer:
We notice that the original proximal operator in Step 5 of PDS (Algorithm \ref{alg:PDS}) corresponds to the non-negative constraints of edge weights, which is not a complicated structural prior but a must-have. Meanwhile, replacing it with TopoDiffVAE at every unrolling layer might lead to a overly complex model that is hard to train and might suffer from overfitting. Therefore, for the experiments we only use TopoDiffVAE in the last layer of L2G while keeping the handcrafted prior, i.e. Step 5 of Algorithm \ref{alg:PDS}, for the first $T-1$ layer. The experimental results in Section \ref{sec: synexp} show the effectiveness and efficiency of L2G. \par

\subsection{Training settings}

For the experiments in Section \ref{sec: synexp}, we train L2G and Unrolling with the Adam optimiser \cite{adam2015}. The learning rate is exponentially-decayed from an initial value of $10^{-2}$ at a rate of $0.95$. The number of training samples is changed with the graph size $m$. For $m = 20$, we use $4000$ for training and $1000$ for validation. Figure \ref{fig:training-sample-accuracy} shows how many training samples are needed (at different graph size). All the metrics reported in the paper are computed from 64 test samples that are additionally generated. The batch size is set to $32$. We run all the experiments on a remote machine of Amazon EC2 G4dn-2xlarge instance\footnote{https://aws.amazon.com/ec2/instance-types/g4/}. Figure \ref{fig:time-per-epoch} shows training time per epoch with different number of samples. Once trained, the computational cost of applying L2G to learn a graph from new observations is negligible. The source code is included in the supplementary file.

\begin{figure*}[h]
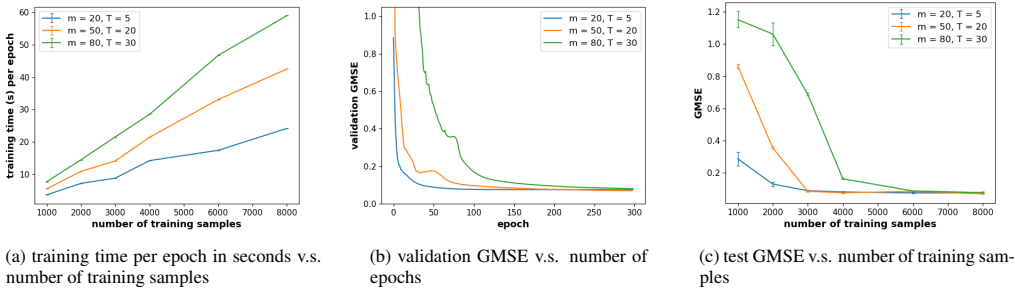

\centering
\begin{subfigure}[t]{0.33\textwidth}
    \captionsetup{width=.9\linewidth}
    \centering
    \includegraphics[width = 0.98\linewidth]{Fig/time-per-epoch.png}
    \subcaption{training time per epoch in seconds v.s. number of training samples}
    \label{fig:time-per-epoch}
\end{subfigure}%
\begin{subfigure}[t]{0.33\textwidth}
    \centering
    \captionsetup{width=.8\linewidth}
    \includegraphics[width = 0.98\linewidth]{Fig/gmse-epoch.png}
    \subcaption{validation GMSE v.s. number of epochs}
    \label{fig:number-epoch}
\end{subfigure}%
\begin{subfigure}[t]{0.33\textwidth}
    \captionsetup{width=.9\linewidth}
    \centering
    \includegraphics[width = 1\linewidth]{Fig/num_training_gmse.png}
    \subcaption{test GMSE v.s. number of training samples}
    \label{fig:training-sample-accuracy}
\end{subfigure}
\caption{Training L2G for different size of graphs $m$ with different number of unrolls $T$}

\label{fig: training-samples-time}
\end{figure*}

% \appendix
%\section{Training Configurations} \label{training configuration}

%We use the built-in algorithms in NetworkX \cite{hagberg2008exploring} to generate the corresponding binary adjacency matrices $\*A$ during synthetic data generation.

\section{Additional results}
\label{appendix:model:behaviour}

\subsection{Ability to recover binary graphs}
\label{appendix:binary}

\begin{table}[h!]
\footnotesize
\caption{A table of GMSE and Structural Metrics in recovering binary graphs}
\label{table:appendix:binary}
\centering
\begin{tabular}{@{}lccc@{}}
\toprule
\multicolumn{1}{c}{metric / model}  & groundtruth      & Deep-graph       & L2G              \\ \midrule
{\ull \textbf{Scale-free (BA):}}     &                  &                  &      \\
\hspace{0.2cm} GMSE                  & -         & $0.802 \pm .003$ & $0.060 \pm .002$ \\
\hspace{0.2cm} AUC                & -                & $0.565 \pm .101$ & $0.999 \pm .000$ \\
\hspace{0.2cm} KS test score              & 96.15\%          & 68.32\%          & 96.15\%          \\

{\ull \textbf{Community (SBM):}}     &                  &                  &                  \\
\hspace{0.2cm} GMSE                   & -                & $0.901 \pm .016$ & $0.080 \pm .010$ \\

\hspace{0.2cm} AUC                   & -                & $0.701 \pm .088$ & $0.992 \pm .001$ \\
\hspace{0.2cm} community score              & $0.472 \pm .002$ & $0.311 \pm .006$ & $0.479 \pm .005$ \\

{\ull \textbf{Small-world (WS):}}    &                  &                  &                  \\
\hspace{0.2cm} GMSE                 & -                & $0.873 \pm .010$ & $0.056 \pm .004$ \\
\hspace{0.2cm} AUC                 & -                & $0.519 \pm .023$ & $0.997 \pm .000$ \\
\hspace{0.2cm} average shortest path & $2.23 \pm .028$  & $1.111 \pm .008$ & $2.225 \pm .041$ \\ \bottomrule
\end{tabular}
\end{table}

Deep-graph \cite{belilovsky17a} is a state-of-the-art learning-based method, but its output is binary graph structure that is different from our setting. In order to have a fair comparison, we redo the experiments with another synthetic dataset with binary groundtruth graphs. The results are shown in Table \ref{table:appendix:binary} as an addition to the results of recovering weighted graph in the main body (see Table \ref{table:syn_eval_gmse} and Table \ref{table:syn_graph_statistics}). Both results lead to the same conclusion that Deep-graph [3] has a less satisfactory performance compared to the proposed L2G.

\subsection{Ability to recover graph topologies with specific structures.}
\label{appendix:WS:SBM}
Continued from Section \ref{sec: synexp}, we test the accuracy of recovering groundtruth graphs of different structural properties. We deliberately increase the rewiring probability in generating WS graphs in Figure \ref{fig:heatmap_WS} and add inter-community edges on SBM graphs in Figure \ref{fig: heatmap_SBM} to increase the learning difficulties in both cases. PDS may effectively detect the $k$-NN structure behind the construction of WS and the communities (block diagonal structure of the adjacency matrix) in SBM, but largely ignored the new edges from the procedures above. By contrast, L2G can precisely predict the additional rewired edges and inter-community edges. \par

\begin{figure*}[h]
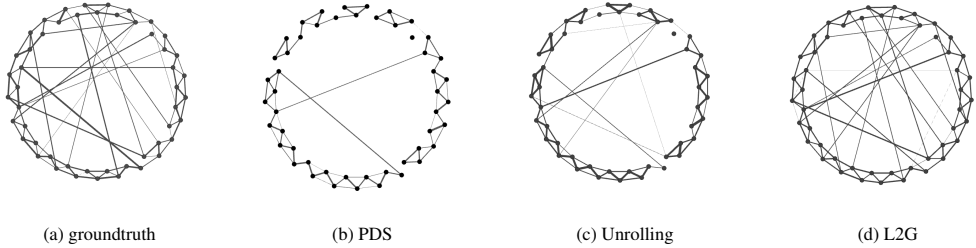

\centering
\begin{subfigure}[t]{0.25\textwidth}
    \centering
    \includegraphics[width = 1\linewidth]{Fig/heatmaps/WS_m50_groundtruth_net.png}
    \subcaption{groundtruth}
    \label{fig:heatmap_WS_gt}
\end{subfigure}%
\begin{subfigure}[t]{0.25\textwidth}
    \centering
    \includegraphics[width = 0.98\linewidth]{Fig/heatmaps/WS_m50_PDS_net.png}
    \subcaption{PDS}
    \label{fig:heatmap_WS_PDS}
\end{subfigure}%
\begin{subfigure}[t]{0.25\textwidth}
    \centering
    \includegraphics[width = 0.9\linewidth]{Fig/heatmaps/WS_m50_unrolling_x20_net.png}
    \subcaption{Unrolling}
    \label{fig:heatmap_WS_unrolling}
\end{subfigure}%
\begin{subfigure}[t]{0.25\textwidth}
    \centering
    \includegraphics[width = 0.9\linewidth]{Fig/heatmaps/WS_m50_l2g_x19_net.png}
    \subcaption{L2G}
    \label{fig:heatmap_WS_L2G}
\end{subfigure}
\caption{The ability to recover small-world network}
\label{fig:heatmap_WS}
\end{figure*}

\begin{figure*}[h!]
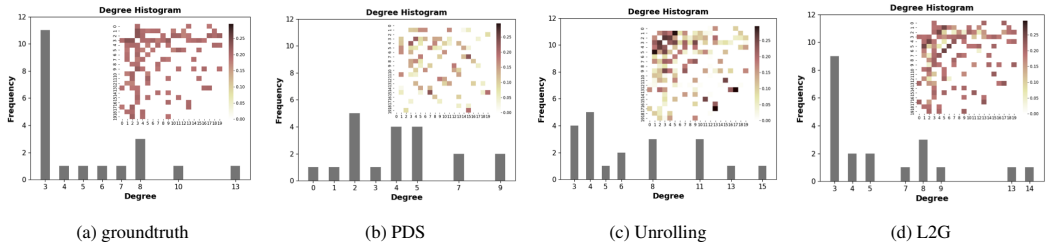

\centering
\begin{subfigure}[t]{0.25\textwidth}
    \centering
    \includegraphics[width = 1\linewidth]{Fig/heatmaps/Deg_BA_groundtruth.png}
    \subcaption{groundtruth}
    \label{fig:heatmap_BA_gt}
\end{subfigure}%
\begin{subfigure}[t]{0.25\textwidth}
    \centering
    \includegraphics[width = 1.03\linewidth]{Fig/heatmaps/Deg_m20_PDS.png}
    \subcaption{PDS}
    \label{fig:heatmap_BA_PDS}
\end{subfigure}%
\begin{subfigure}[t]{0.25\textwidth}
    \centering
    \includegraphics[width = 1\linewidth]{Fig/heatmaps/Deg_BA_unrolling_x9.png}
    \subcaption{Unrolling}
    \label{fig:heatmap_BA_unrolling}
\end{subfigure}%
\begin{subfigure}[t]{0.25\textwidth}
    \centering
    \includegraphics[width = 1\linewidth]{Fig/heatmaps/Deg_BA_l2g.png}
    \subcaption{L2G}
    \label{fig:heatmap_BA_l2g}
\end{subfigure}
\caption{The ability to recover scale-free networks}
\label{fig:heatmaps_BA}
\end{figure*}

\begin{figure*}[h!]
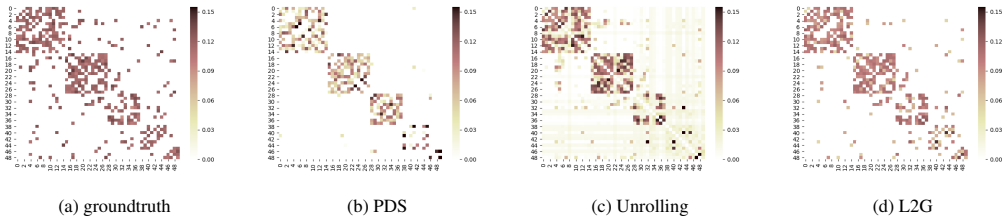

\centering
\begin{subfigure}[t]{0.25\textwidth}
    \centering
    \includegraphics[width = 1\linewidth]{Fig/heatmaps/SBM_m50_groundtruth.png}
    \subcaption{groundtruth}
    \label{fig:heatmap_SBM_gt}
\end{subfigure}%
\begin{subfigure}[t]{0.25\textwidth}
    \centering
    \includegraphics[width = 1\linewidth]{Fig/heatmaps/SBM_m50_PDS.png}
    \subcaption{PDS}
    \label{fig:heatmap_SBM_PDS}
\end{subfigure}%
\begin{subfigure}[t]{0.25\textwidth}
    \centering
    \includegraphics[width = 1\linewidth]{Fig/heatmaps/SBM_m50_unrolling_x50.png}
    \subcaption{Unrolling}
    \label{fig:heatmap_SBM_unrolling}
\end{subfigure}%
\begin{subfigure}[t]{0.25\textwidth}
    \centering
    \includegraphics[width = 1\linewidth]{Fig/heatmaps/SBM_m50_l2g_x50.png}
    \subcaption{L2G}
    \label{fig:heatmap_SBM_L2G}
\end{subfigure}%
\caption{The ability to recover community-structured networks}
\label{fig: heatmap_SBM}
\end{figure*}

\subsection{Scaling to large graphs.}

In Figure \ref{fig:converge_grpah_size} of Section \ref{sec: synexp}, we show that L2G performs better in learning graphs with $m = 100$ nodes. The main challenge of scaling to large graphs (i.e. millions of nodes) lies in the memory required for large training samples including both graphs and data on nodes. Fortunately, we show that the proposed methods can be pretrained and then transferred to real-world data (see the real-world applications in Section \ref{sec: synexp}). We can use a similar trick to pretrain an Unrolling model and then transferred to evaluate a graph with more nodes. This is because the learnable parameters of the Unrolling model ($\boldsymbol{\theta}_{\text{unrolling}}$) are not dependent on graph size. More extensive tests on this is one of the main directions for future work.

\subsection{Stability in learning brain functional connectivity.}

\begin{wraptable}{r}{0.5\textwidth}
%\captionsetup{width=.65\linewidth}
\caption{Mean Spearman correlation between estimated brain graphs}
%\caption{Mean Spearman correlation between the estimated brain graphs obtained from different random subjects in ABIDE.}
\footnotesize
\begin{tabular}{@{}lccccc@{}}
\toprule
method & Control Group & Autistic Group \\ 
\midrule
Graph Lasso \cite{Banerjee2008ModelData} & 0.21 $\pm$ .003 & 0.21 $\pm$ .003 \\
DeepGraph~\cite{belilovsky17a} & 0.23 $\pm$ .004 & 0.17 $\pm$ .003 \\
DeepGraph+P\cite{belilovsky17a} & 0.19 $\pm$ .004 & 0.14 $\pm$ .004 \\
L2G (proposed) & \textbf{0.76 $\pm$ .056} & \textbf{0.34 $\pm$ .061} \\
\bottomrule
\end{tabular}
\label{tab:abide}
\end{wraptable}

Following the stability experiment in \cite{belilovsky17a}, we also report the mean Spearman correlation in Table \ref{tab:abide}. Specifically, we apply L2G to infer the brain functional connectivity from BOLD time series collected from 35 subjects in the autistic group and control group as described in Section \ref{sec: synexp}. In the whole data set, there are a total of 539 and 573 subjects in the two groups, respectively. At inference time, we apply a pretrained L2G model on a random collection of 35 subjects from each group. The inference procedure is repeated for 50 times, which yields 50 brain functional connectivity graphs for each group. For every pair of 50 graphs in each group, we apply the Spearman correlation to the rank of inferred edge weights. In Table \ref{tab:abide}, we report the mean (with 95\% confidence interval) of the Spearman correlation between pairs of 50 graphs in each group, where the graphs are inferred using different models. The results show that L2G outperforms the state-of-the-art models in producing stable predictions on brain connectivity. 

%To study the stability of the solutions of different methods on the Autism Brain Data Exchange (ABIDE) dataset. The ABIDE dataset has 539 individuals suffering from autism spectrum disorder and 573 typical controls. We run the model prediction on 35 subjects of the control group and the autistic group under 50 trails, and calculate the average Spearman correlation between every two trails in each group. The Spearman correlation is a nonparametric measure of the dependence between two variables, which shows the stability of the method. Table \ref{tab:abide} shows the Average Spearman correlation of ABIDE dataset solution pairs, our method outperforms the other methods.

\end{appendices}

\end{document}